\theoremstyle{definition}
\newtheorem{theorem}{Theorem}
\newtheoremstyle{exampstyle}
{1pt} 
{1pt} 
{} 
{} 
{\bfseries} 
{} 
{.5em} 
{} 
\theoremstyle{exampstyle} \newtheorem{example}{Example}
\theoremstyle{exampstyle} \newtheorem{remark}{Remark}
\theoremstyle{exampstyle} 
\theoremstyle{exampstyle} \newtheorem{lemma}{Lemma}
\theoremstyle{exampstyle} \newtheorem{assumption}{Assumption}
\xpatchcmd{\@thm}{\thm@headpunct{.}}{\thm@headpunct{}}{}{}
\begin{document}
\title{Hierarchical Federated Learning with Quantization: Convergence Analysis and System Design}

\author{Lumin~Liu,~\IEEEmembership{Student~Member,~IEEE,}
	Jun~Zhang,~\IEEEmembership{Fellow,~IEEE,}
	Shenghui~Song,~\IEEEmembership{Member,~IEEE,}
	and~Khaled~B.~Letaief,~\IEEEmembership{Fellow,~IEEE}
\thanks{Part of the results was presented at the IEEE International Conference on Communications (ICC), 2020 \cite{9148862}. 
{
This work was supported by the General Research Fund (Project No. 16208921) from the Research Grants Council of Hong Kong.} The authors are with the Department of Electronic and Computer Engineering, Hong Kong University of Science and Technology, Hong Kong (Email: {lliubb, eejzhang, eeshsong, eekhaled} @ust.hk). (The corresponding author is J. Zhang.)}
}

\maketitle

\begin{abstract}
Federated learning (FL) is a powerful distributed machine learning framework where a server aggregates models trained by different clients without accessing their private data. Hierarchical FL, with a client-edge-cloud aggregation hierarchy, can effectively leverage both the cloud server's access to many clients' data and the edge servers' closeness to the clients to achieve a high communication efficiency. Neural network quantization can further reduce the communication overhead during model uploading. To fully exploit the advantages of hierarchical FL, an accurate convergence analysis with respect to the key system parameters is needed. Unfortunately, existing analysis is loose and does not consider model quantization. In this paper, we derive a tighter convergence bound for hierarchical FL with quantization. The convergence result leads to practical guidelines for important design problems such as the client-edge aggregation and edge-client association strategies. Based on the obtained analytical results, we optimize the two aggregation intervals and show that the client-edge aggregation interval should slowly decay while the edge-cloud aggregation interval needs to adapt to the ratio of the client-edge and edge-cloud propagation delay. Simulation results shall verify the design guidelines and demonstrate the effectiveness of the proposed aggregation strategy.
\end{abstract}

\begin{IEEEkeywords}
Federated Learning, Convergence Analysis, Edge Learning.
\end{IEEEkeywords}

%
\IEEEpeerreviewmaketitle

\section{Introduction}
\par 
Federated Learning (FL) \cite{mcmahan2017communication} is a promising and powerful training framework for privacy-preserving Deep Learning (DL), where clients train their models locally and then upload to a parameter server for model aggregation. This process is repeated for many rounds until the aggregated model at the server reaches a target accuracy.  With FL, model training occurs at the clients and only the trained models are required to be aggregated at the server. This eliminates the need for sharing private user data and thus preserves data privacy. The feasibility of FL has been verified in real-world DL applications, e.g., the Google keyboard prediction \cite{Keyboardprediction18Hard}. However, the long propagation delay, the increasing size of the DL models and limited resources make  communication efficiency one of the most critical challenges in FL. Specifically, hundreds to thousands of rounds of communications may be required to reach a desired model accuracy. Furthermore, the overhead of one round of communication, including upload and download, is proportional to the model size, making it {inefficient for employing large models, especially when the clients are connected to the server by wireless links}.

\par 
Most initial studies on FL assumed one cloud node as the parameter server, while some recent works \cite{tran2019federated,wang2019adaptive} proposed to address the communication challenge by leveraging Mobile Edge Computing (MEC) platforms \cite{MECSurveyMao}. This transition led to Federated Edge Learning (FEL) \cite{EdgeAIKhaled}, which enables FL at the network edge to support ultra-low latency applications \cite{intelligentedgejun,8884164}. Although edge-based FL enjoys a lower round-trip latency, the number of clients that can participate in the training process is small, which degrades the training performance. Thus, a trade-off between the round-trip latency and learning performance exists when choosing between the cloud server and the edge server for the two-layer architecture.
This trade-off motivated a hierarchical architecture for FL \cite{9148862,castiglia2021multilevel,zhou2019distributed}, which includes one cloud server, multiple edge servers and many clients. There are two levels of aggregation in hierarchical FL, namely, the efficient and parallel edge aggregation and the time-consuming cloud aggregation. It was shown in our previous work \cite{9148862}, both theoretically and empirically, that hierarchical FL achieves a faster convergence speed than the conventional two-layer cloud-based FL. Furthermore, empirical experiments showed that the hierarchical architecture reduces training time compared with the single-server architecture. 
\par
Quantized message passing during the aggregation stage is another important technique to improve the communication efficiency and presents a similar communication-accuracy trade-off. The size of a full-precision model can be up to hundreds of megabytes for large DL models. By transmitting a low-precision approximate of the model, the communication cost can be greatly reduced. However, it is noted that an appropriate level of quantization can accelerate the training while a very coarse approximation will fail the training. Thus, model quantization shares a similar communication-accuracy trade-off as the hierarchical structure. As such, we shall investigate them together in this paper. Typical quantization techniques include low-rank approximation \cite{wang2018atomo}, sparsification \cite{NEURIPS2018_3328bdf9} and low-bit quantization \cite{NIPS2017_6c340f25}. In the two-layer FL system, both theoretical analyses and experimental results have shown that quantization can significantly improve the communication efficiency  \cite{fasttcom20,konevcny2016federated,reisizadeh2020fedpaq}. However, the associated analysis for hierarchical FL is not available yet.


\par
To fully exploit the potential of these communication-efficient techniques and architectures, we need an accurate convergence analysis based on which system optimization can be performed \cite{tran2019federated, schedulingICCNiu}.  For example, when minimizing the communication cost in a FL system, a typical and critical question to ask is how to optimize the aggregation interval for a given training time budget $T$. The solution of this problem highly depends on the variance term caused by the local steps in the derived convergence upper bound \cite{wang2019adaptive,fasttcom20,MLSYS2019Wang}. 
{Thus, deriving tighter convergence bounds is of crucial importance not only to the theoretical understanding, but also to practical system design. There have been many recent efforts on this aspect, e.g.,  \cite{khaled2020tighter, pmlr-v130-haddadpour21a}, but the focus is on the server-based two-layer structure, and the investigation.}
More importantly, the theoretical understanding of hierarchical FL is primitive. Existing analyses and system design for the hierarchical FL system \cite{castiglia2021multilevel,zhou2019distributed, wang2020local} were carried out assuming full-precision model updates, and the obtained convergence bound was loose \cite{castiglia2021multilevel, zhou2019distributed}, which made the optimization formulation inaccurate. The analysis developed in our previous work \cite{9148862} has a complex expression and thus is difficult to be utilized for further system optimization. Thus, getting a sharp and tighter bound for hierarchical quantized FL is critical.
\par
In this paper, we consider a communication-efficient hierarchical FL system, where two levels of aggregation and quantization are adopted. The aim is to provide an accurate theoretical analysis to support further system design. Based on the analytical results, system design guidelines will be provided and an adaptive aggregation interval selection scheme is proposed.

\subsection{Related Works}

The convergence of the two-layer FL has been well established for convex and non-convex loss functions in \cite{stich2019local, khaled2020tighter, wang2018cooperative}. In \cite{khaled2020tighter, wang2018cooperative}, the additional error term caused by multiple local updates was shown to grow linearly with the aggregation interval and this is the tightest  error bound proved so far in the literature. In \cite{reisizadeh2020fedpaq}, the convergence of the FedAvg algorithm considering a random client selection strategy and model quantization was also investigated{, where the communication cost is reduced with client partial participation and model quantization}. 
{Recently, Federated Dropout \cite{wen2021feddrop} was proposed. Such a method adopts a random model pruning approach, i.e., dropout, to reduce the local computation and communication cost. Clearly, this is another direction to improve the communication efficiency and can be combined with model quantization methods to further reduce the communication cost.}
\par
System optimization is critical to improve the communication efficiency of the FL system. Current research for system design mainly focused on the following three aspects: spectrum allocation, power control \cite{tran2019federated}, and local aggregation interval control \cite{wang2019adaptive, MLSYS2019Wang}. In these works, the system design was formulated as an optimization problem to minimize the latency of the FL system subject to a given accuracy constraint. 
\par 
The convergence analysis for hierarchical FL has been less well studied. Our previous work \cite{9148862} analyzed the convergence of hierarchical FL with the full-precision model update for both convex and non-convex loss functions. However, the optimizer at the client side is a full-batch gradient descent, which may not be practical for devices with a limited computation power. In \cite{zhou2019distributed}, the authors provided a convergence analysis of hierarchical FL for non-convex loss functions, where the obtained error bound is quadratic with the aggregation interval. Later, a tighter error bound was provided in \cite{wang2020local} for non-convex loss functions. The analyses in \cite{zhou2019distributed, wang2020local} both considered full-precision model updates. In addition to the theoretical convergence analysis, a typical and unique system design problem for hierarchical FL, i.e., the edge-client association and resource allocation problem, were investigated in \cite{luo2020hfel,9629331},. 

\subsection{Contributions}
This paper investigates a hierarchical FL system with model quantization. A communication-efficient training algorithm, Hier-Local-QSGD, is proposed, where clients upload quantized updates to their associated edge servers after $\tau_1$ steps of local updates, and the edge servers upload quantized updates to the cloud server after $\tau_2$ steps of edge aggregation. We summarize the paper contributions as follows:

\begin{itemize}
	\item A tighter convergence bound of the Hier-Local-QSGD algorithm is provided for non-convex loss functions, where the variance caused by the aggregation interval is reduced from a quadratic term to a linear term. We show that after $K$ communication rounds with the cloud server, i.e., $K\tau_1\tau_2$ local update iterations, Hier-Local-QSGD converges to a first-order stationary point at a rate of $\mathcal{O}(1/\sqrt{K\tau_1\tau_2})$.
	\item The obtained analytical result leads to two interesting design guidelines:
	\begin{enumerate*}
		\item when the quantization variance exceeds a threshold, infrequent client-edge aggregation is preferred; and 
		\item the edge-client association strategy has no influence on the convergence for a given number of clients and edge servers. 
	\end{enumerate*} 
	\item The derived convergence upper bound is utilized to formulate the aggregation interval selection problem, which is solved by an adaptive aggregation interval control algorithm. It will be shown that the client-edge aggregation interval $\tau_1$ is expected to decay at a rate that depends on the training loss while the edge-cloud aggregation interval $\tau_2$ is determined by the ratio of the propagation delay and the number of edge servers and clients.
\end{itemize}
To the best of our knowledge, this is the first work that proves the linear error term with respect to the local update, as well as presenting a system design for hierarchical quantized FL.

\par 
The rest of this paper is organized as follows. In Section \ref{System Description}, we will introduce the learning problem in FL, the hierarchical FL system, and the  Hier-Local-QSGD algorithm. In Section \ref{Convergence Analysis}, we present the convergence analysis with a sketch of the proof while a detailed proof can be found in the appendix. Discussions on the convergence results are included to provide a connection between the analysis and system design guidelines. In Section \ref{Applications}, we will introduce the adaptive aggregation interval selection scheme. In Section \ref{Simulations}, empirical results, as applied on the CIFAR-10 dataset, are presented. We will also verify the two guidelines of the system design and the effectiveness of the adaptive aggregation interval selection scheme.

\section{System Description} \label{System Description}

In this section, we will introduce the FL learning problem and different FL architectures. The standard single-server FL system and its algorithm will be briefly reviewed. We will then introduce the hierarchical FL system with one cloud server, $s$ edge servers, $n$ users, and the corresponding Hier-Local-QSGD training algorithm. 
{
\subsection{FL Problem}
For FL, suppose that there are $n$ clients, and the $i$-th client is with dataset $\{ \mathcal{D}_i \}$ of size $D_i$. Based on the local dataset $\{ \mathcal{D}_i \}$, the empirical local loss function for the $i$-th user is $f_i (x) = \frac{1}{D_i} \sum_{{\xi_j}\in \mathcal{D}_i}\mathcal{L}(x,\xi_j) \label{eq:4}.$
The goal of the FL training algorithm is to learn a global model that performs well on the joint data distributions. Denote the joint dataset as $\mathcal{D} = \bigcup_{i=1}^n \mathcal{D}_i $. Then, the final loss function that needs to be minimized is given by $f (x) = \frac{1}{\sum_{i=1}^{n}D_i} \sum_{{\xi_j}\in \mathcal{D}}\mathcal{L}(x,\xi_j) = \frac{1}{\sum_{j=1}^{n}D_j} \sum_{i=1}^n D_i f_i(x).$
}

\begin{table}[t]
	\centering
	\small
	\caption{Key Notations for the Hier-Local-QSGD algorithm}
	\vspace{10pt}
	\label{table:1}
	\begin{tabular}{ | m{5em} | m{5.9cm}| } 
		\hline
		\textbf{Symbol}& \textbf{Definitions} \\[5pt]
		\hline
		$x$ & $x \in \mathbb{R}^p$, parameters of the learning model \\ [5pt]
		\hline
		$x_t$ & Model parameters after total $t$ steps of local updates \\ [5pt]
		\hline
		$n$ & The number of participated clients in the whole hierarchical FL system \\ [5pt]
		\hline
		$s$ & The number edge servers in the whole system \\ [5pt]
		\hline
		$m^ \ell$ & The number of participated clients under edge parameter server $\ell$  \\ [5pt]
		\hline
		$\mathcal{C}^\ell$ & The set of the clients under edge $\ell$ \\ [5pt]
		\hline
		$\tau_2$ & The edge-cloud aggregation interval \\[5pt] 
		\hline
		$\tau_1$ & The client-edge aggregation interval   \\ [5pt]
		\hline
		$k$ & The index of the cloud aggregation round \\ [5pt]
		\hline
		$t_1$ & The index of the client local update step from the last edge aggregation, $0 \leq t_1< \tau_1$ \\ [5pt]
		\hline
		$t_2$ &The index of the edge-aggregation round from the last cloud aggregation, $0 \leq t_2 < \tau_2 $ \\ [5pt]
		\hline
		$x_{k, t_2, t_1}$ &Local model parameters on client $i$ at local update step $(k, t_2, t_1)$, where the total number of local updates $t=k\tau_1\tau_2+ t_2\tau_1+t_1$ \\ [5pt]
		\hline
		$u^{\ell}_{k,t_2}$ & Edge model parameters on edge $\ell$ after the $t_2$-th edge aggregation in the $k$-th cloud aggregation interval  \\ [5pt]
		\hline
		$x_k$&  Cloud model parameters after the $k$-th cloud aggregation \\ [5pt]
		\hline
		
	\end{tabular}
\end{table}

\subsection{Training Algorithm of Two-Layer FL and Hierarchical FL}
 
In the two-layer FL system, there is one central parameter server and $n$ clients. Each client performs $\tau$ steps of SGD iterations locally and then uploads the model updates to the central parameter server. The central server averages the updates and redistributes the averaged outcomes back to each client. The process repeats until the model reaches a desired accuracy or the limited resources, (e.g., the energy or time budget) run out. 
\par
The parameters of the local model on the $i$-th client after $t$ steps of SGD iterations are denoted as $x_t^i$. In this case,  $x_t^i$ in the FedAvg algorithm evolves in the following way: 

\begin{equation}
\text{$x_t^i$ } = 
\begin{cases}
\text{$x_{t-1}^i - \eta  \tilde{\nabla} f_i(x_{t-1}^i)$} &  \text{$t \mid \tau \neq 0$}\\
\text{ $\frac{1}{n} \sum_{i=1}^n[x_{t-1}^i - \eta \tilde{\nabla} f_i(x_{t-1}^i)]$ } &
\text{$t \mid \tau = 0$}
\end{cases} \label{eq:6}
\end{equation}

\par
In FedAvg, the model aggregation step can be interpreted as a way to exchange information among the clients. Thus, the aggregation at a cloud parameter server can incorporate data from many clients, but the communication cost is high. On the other hand, aggregation at an edge parameter server only incorporates a small number of clients with a much cheaper communication cost. 
\par
To combine these advantages, a hierarchical FL system is considered, which has one cloud server, $s$ edge servers indexed by $\ell$, with disjoint client sets $\{\mathcal{C}^\ell\}_{\ell=1}^s$, and $n$ clients indexed by $i$ and $\ell$, with distributed datasets $\{\mathcal{D}_i^\ell\}_{i=1}^N$.  Other key notations that are important for the algorithm and the theoretical analysis are summarized in Table \ref{table:1}. The hierarchical FL system exploits the natural client-edge-cloud communication hierarchy in current communication networks.

\par
With this hierarchical FL architecture, we propose a Hier-Local-QSGD algorithm as described in Algorithm \ref{algorithm1}. The key steps of the Hier-Local-QSGD algorithm include the following two modules to improve communication efficiency.

\begin{algorithm}[t] 
	\setstretch{1}
	\SetAlgoLined
	Initialize the model on the cloud server $x_0$\;
	\For{$k = 0,1,\dots, K-1$}{
		\For{$\ell = 1,\dots,s$ \text{edge servers in parallel}}{
			Set the edge model same as the cloud server model\;
			$u_{k,0}^\ell = x_k$\;
			\For{$t_2 = 0,1,\dots,\tau_2-1$}{
				\For{$i \in \mathcal{C}^\ell$ \text{clients in parallel}}{
					Set the client model same as the associated edge server model\;
					$x_{k,t_2,0}^i= u_{k,t_2}^\ell$\;
					\For{$t_1 = 0,1,\dots,\tau_1-1$}{
						$x_{k,t_2,t_1+1}^i = x_{k,t_2,t_1}^i - \eta \Tilde{\nabla}f_i(x_{k,t_2,t_1}^i)$
					}
					Send $Q_1(x_{k,t_2\tau_1}^i - x_{k,t_2,0}^i)$ to its associated edge server
				}
				Edge server aggregates the quantized updates from the clients\;
				$u_{k,t_2+1}^\ell = u_{k,t_2}^\ell + \frac{1}{m^\ell}\sum_{i\in \mathcal{D^\ell}} Q_1(x_{k,t_2\tau_1}^i - x_{k,t_2,0}^i)$
			}
			Send $Q_2(u_{k,\tau_2}^\ell - u_{k,0}^\ell)$
		}
		Cloud server aggregates the quantized updates from the edge servers\;
		$x_{k+1} = x_k + \sum_{\ell=1}^s  \frac{m^\ell}{n} Q_2(u_{k,\tau_2}^\ell - u_{k,0}^\ell)$
	}
	\caption{Hierarchical Local SGD with Quantization (Hier-Local-QSGD)}
	\label{algorithm1}
\end{algorithm}

\subsubsection{\textbf{Frequent Edge Aggregation and Infrequent Cloud Aggregation}}
\par
Periodic aggregation is the key step in reducing the communication cost in FL. A larger aggregation interval, $\tau$, reduces the communication rounds. But a large $\tau$ will degrade the performance of the obtained DL model. This is because too many steps of local SGD updates will lead the local models to approach the optima of the local loss function $f_i(x)$ instead of the global loss function $f(x)$. 
\par
Edge aggregation enjoys a lower propagation latency compared with cloud aggregation. Hence, in Hier-Local-QSGD, each edge server efficiently aggregates the models within its local area for several times before the cloud aggregation. To be more specific, after every $\tau_1$ local SGD updates on each client, each edge server averages its clients' models. After every $\tau_2$ edge aggregations, the cloud server averages all the edge servers' models. Thus, the communication with the cloud happens after every $\tau_1 \tau_2$ local updates. In this way, the local model is less likely to be biased towards its local minima compared with the case in FedAvg with an aggregation interval of $\tau=\tau_1\tau_2$.

\subsubsection{\textbf{Quantized Model Updates}}
The overall communication cost in FL also depends on the DL model size, which determines the amount of data to be transmitted in each communication round. Quantization is often used to reduce the size of the model updates. A low-precision quantizer reduces the communication overhead but introduces additional noise during the training process, which will ultimately degrade the trained model performance. Thus, investigating the effect of quantization is important.

\begin{figure*}[t] 
	\centerline{\includegraphics[width=.85\linewidth]{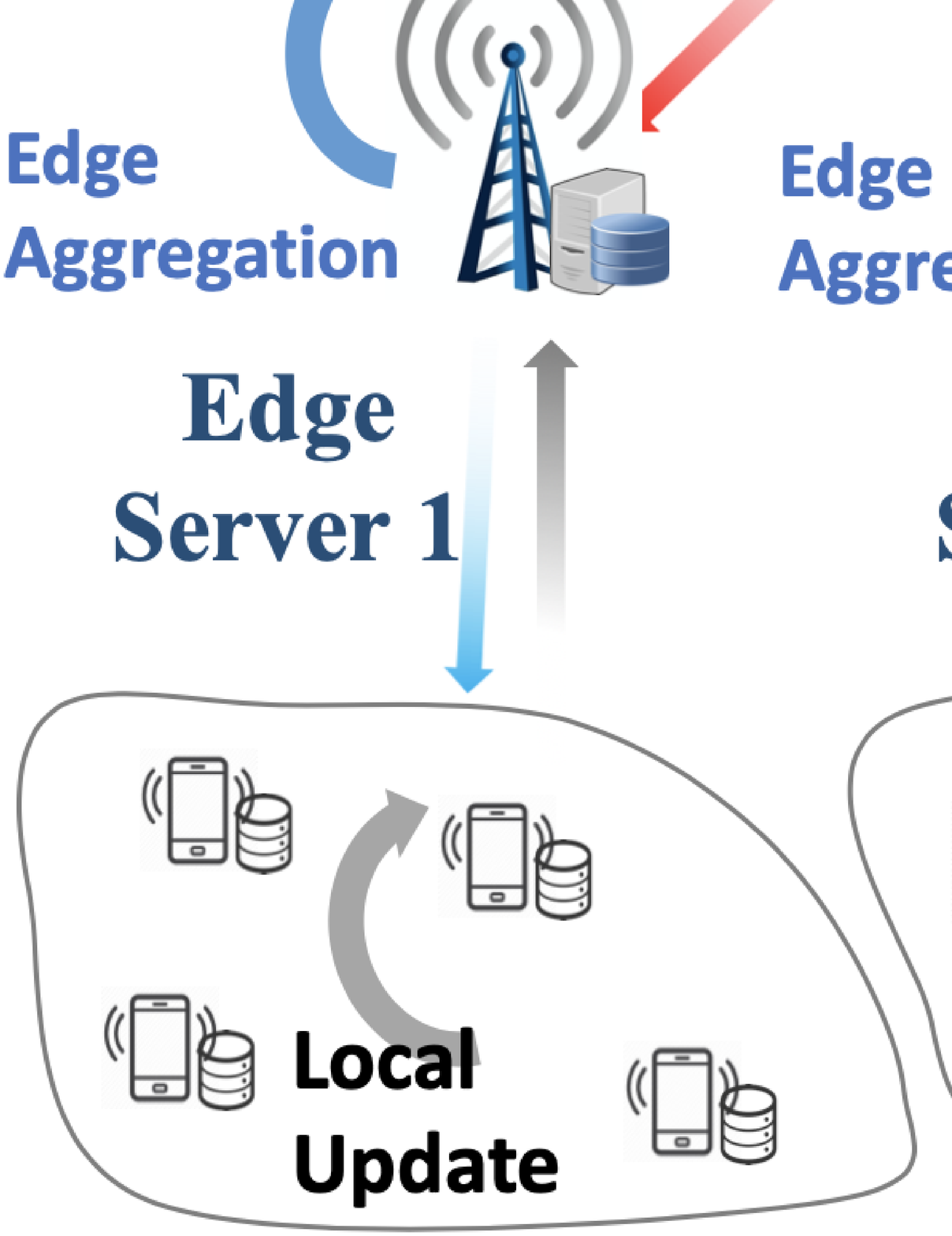}} 
	\vspace{20pt}
	\caption{Illustration of the hierarchical architecture and Hier-Local-QSGD algorithm.}
	\label{fig2}
\end{figure*}

\par
Here, we give two examples of widely-used random quantizers, i.e., random sparsification \cite{wang2018atomo} and stochastic rounding \cite{NIPS2017_6c340f25}.
\begin{example}{(Random Sparsification)}\label{example1}
	For any $\boldsymbol{x} \in \mathbb{R}^n$, fix $r \in {1,\dots,d}$ and let $\zeta \in \mathbb{R}^d$ be a (uniformly distributed) random binary vector with $r$ non-zero entries. The random sparsification operator is given by:
	\begin{equation*}
		Q(x) = \frac{d}{r}\left(\zeta \odot \boldsymbol{x} \right)
	\end{equation*} 
	where $\odot$ denotes the Hadamard (entry-wise) product.
\end{example}
{
\begin{example}{(Stochastic Rounding)} \label{example2}
	For any $\boldsymbol{x} \in \mathbb{R}^n$ with $\boldsymbol{x} \neq 0$, stochastic rounding ${Q}_{s}(\boldsymbol{x})$ is defined as
	\begin{equation}
		Q_{s}(x_i) = \|\boldsymbol{x} \|_2 \cdot sgn(x_i) \cdot \xi_i(\boldsymbol{x}, s),
	\end{equation}	
	where $\xi_i(\boldsymbol{x}, s)$'s are independent random variables defined as follows, and $s \geq 1$ is a tuning parameter, corresponding to the quantization levels. Let $0 \leq \ell < s$ be an integer such that $|x_i|/\|\boldsymbol{x}_2 \| \in [ \ell/s, (\ell+1)/s ]$. Then,
	\begin{equation*}
		\xi_i(\boldsymbol{x}, s) = 
		\begin{cases}
		\ell/s& \text{with probability } 1-(\frac{|x_i|}{\|\boldsymbol{x} \|}, s); \\
		(\ell+1)/s & \text{otherwise.}
		\end{cases}
	\end{equation*}
	Here, $p(a, s) = as -\ell$ for any $a\in [0,1]$. If $\boldsymbol{x} = 0$, then we define $Q(\boldsymbol{x}, s)  =0$
\end{example}
}
\par
We use $Q_1(\cdot)\text{ and } Q_2(\cdot)$ to represent the specific quantizers applied on the model updates from the client to the edge server and the model updates from the edge servers to the cloud server, respectively.

\par
The system architecture and algorithm flow are illustrated in Fig. \ref{fig2}. A comparison between the FedAvg and Hier-Local-QSGD algorithms is also included in Fig. \ref{fig2}. The details of the Hier-Local-QSGD algorithm are presented in Algorithm \ref{algorithm1}. $x_{k,t_2,t_1}^i$ denotes the local model parameters after $k$ rounds of cloud aggregation, $t_2$ rounds of edge-aggregation and $t_1$ steps of local update on client $i$. Specifically, the total steps of the local iterations $t$ can be expressed as $t = k\tau_1\tau_2 + t_2 \tau_1 + t_1$. We will use the tuple $(k,t_2,t_1)$ for indexing throughout the paper. Similarly, the model parameters on edge $\ell$ after $k$ rounds of cloud aggregation and $t_2$ rounds of edge aggregation are denoted by $u^\ell_{k,t_2}$. Finally, the model parameters on the cloud server after $k$ rounds of cloud aggregation are denoted by $x_k$.
\par
The evolution of the model parameters $x_{k,t_2,t_1}^i, u^\ell_{k,t_2} \text{and } x_k$ can be described follows:
\begin{equation}
\begin{cases}
\small\text{Local Update: } &x_{k,t_2,t_1+1}^i = x_{k,t_2,t_1}^i - \eta \tilde{\nabla} f_i(x_{k,t_2,t_1}^i),\\
 &0 \leq t_1<\tau_1, 0 \leq t_2<\tau_2\\
\small\text{Edge Aggregation: }& x^i_{k, t_2+1,0} =u^\ell_{k,t_2+1} \\
&= u^\ell_{k,t_2} + \frac{1}{m^\ell} \sum_{i \in \mathcal{C}^\ell_i}[Q_1(x_{k,t_2,\tau_1}^i - x_{k,t_2,0}^i)],  \\
& t_1 = \tau_1,0 \leq t_2<\tau_2\\
\small\text{Cloud Aggregation: }& x^i_{k+1,0,0} = u^{\ell}_{k+1,0} = x_{k+1}\\ 
&= x_k + \sum_{\ell=1}^s  \frac{m^\ell}{n} Q_2(u_{k,\tau_2}^\ell - x_k), \\
&t_1=\tau_1, t_2=\tau_2
\end{cases} \label{eq:7}
\end{equation}

\section{Convergence Analysis} \label{Convergence Analysis}
In this section, we present the convergence analysis of the Hier-Local-QSGD algorithm for non-convex loss functions, followed by discussions of the main findings from the obtained convergence bound. We provide a sketch of the proof in this section, while ea detailed proof of the key lemmas can be found in the appendix.
\par
We first present three customary assumptions that are required for the analysis.
\par
\begin{assumption}[L-smoothness] \label{assumption1}
	The loss function $f(x): \mathbb{R}^p  \rightarrow \mathbb{R}$ is $L$-smooth with the Lipschitz constant $L >0$, i.e. :
	\begin{equation*}
	\Vert \nabla f(x) - \nabla f(y) \Vert \leq L \Vert x - y \Vert,
	\end{equation*}
	for all $x, y \in \mathbb{R}^p$.
\end{assumption}

\begin{assumption}[Variance of SGD] \label{assumption2}
	For any fixed model parameter $x$, the locally estimated stochastic gradient $\tilde{ \nabla} f_i(x)$ is unbiased and its variance bounded by a constant $\sigma^2$ for any client. That is,
	\begin{equation*}
	\begin{split}
	&\mathbb{E} [\tilde{ \nabla} f_i(x) |x] = \nabla f(x), \\
	&\mathbb{E} [\left \Vert \tilde{ \nabla} f_i(x) - \nabla f(x) \right \Vert^2 |x ]\leq \sigma^2.
	\end{split}
	\end{equation*}
\end{assumption}

\begin{assumption}[Unbiased Random Quantizer] \label{assumption3}
	The random quantizer $Q(\cdot)$ is unbiased and its variance grows with the squared $\ell_2$ norm of its argument. That is,
	\begin{equation*}
	\begin{split}
	&\mathbb{E} [ Q(x) | x ] = x, \\
	&\mathbb{E} [ \left \Vert Q(x)-x \right \Vert ^2 | x ] \leq q \left \Vert  x \right \Vert ^2.
	\end{split}
	\end{equation*}
	for some positive real constant $q$ and any $x \in \mathbb{R}^p $.
\end{assumption}
\par
Assumption \ref{assumption1} requires that the local loss function to be $L$-smooth, which means that its gradient is $L$-continuous. Assumption \ref{assumption2} on the bias and variance of stochastic gradients is common in the literature \cite{khaled2020tighter, castiglia2021multilevel}. This is quite a weaker assumption compared with the one that uniformly bounds the expected norm of the gradients \cite{zhou2019distributed}. Assumption \ref{assumption3} makes sure that the output of the random quantizer is unbiased and the variance is proportional to the norm of the input. Intuitively, a quantizer with a larger $q$ means a smaller communication cost and low-precision output. {This assumption is satisfied for many commonly used quantization schemes such as the random sparsification introduced in Example \ref{example1}, stochastic rounding in Example \ref{example2}, and the high-dimensional vector quantizer \cite{du2020high}.} For different quantization techniques, the variance parameter $q$ is known in literature and only depends on the quantization
technique itself. Thus, our analysis is applicable to many different quantization techniques by substituting the quantization variance parameter $q$ with the specific quantization parameters. 

\subsection{Convergence Analysis Challenges}  
In the following, we highlight the main challenges in the convergence analysis.
\begin{enumerate}
	\item {Two levels of aggregation}: While the local aggregation at the edge servers can incorporate partial information on the global loss function in a communication-efficient manner, it results in possible gradient divergence at different edge servers. This poses a major challenge in the analysis compared to the FedAvg algorithm \cite{stich2019local,wang2018cooperative}.
	\item{Model Uploading Compression}: The quantization of the local model weights for efficient model uploading will introduce errors in the training process. When there are several partial edge aggregations before a global cloud aggregation, the quantization error caused by the communication between the clients and edge server will accumulate. This indicates that in the final expression of the variance term, the influence of the quantization level and the aggregation interval are intertwined.
	\item {Tightness of the upper-bound}: There exists an analysis of the hierarchical local SGD for non-convex loss functions, e.g., \cite{castiglia2021multilevel,zhou2019distributed}, but the available bound is rather loose. A tighter analysis requires a novel approach since we need to focus on the evolution process of the local model parameter and carefully use inequalities to compute the upper bound. Besides, we do not require the gradient norm to be bounded, which is a relaxed assumption in \cite{zhou2019distributed}. 
	The lack of the uniformly bounded gradient norm assumption makes it difficult to bound the model parameter divergence in the evolution process.
\end{enumerate}
\textbf{Convergence Criterion}: For the error-convergence analysis for non-convex loss functions, the expected gradient norm is often used as an indicator of the convergence \cite{MLSYS2019Wang, bottou2018optimization}.  Specifically, an algorithm achieves $\epsilon$-suboptimal for a given positive value $\epsilon$ if 
$
	\mathbb{E} \left[ \min_{k=0,\dots, K -1} \left \Vert \nabla f(x_k) \right \Vert^2  \right] \leq \epsilon.
$
When $\epsilon$ is arbitrarily small, the algorithm converges to a first-order stationary point.

\subsection{Main Result and Discussions}

The following theorem presents the main convergence result.
\begin{theorem}[Convergence of Hier-Local-QSGD for non-convex loss functions] 
	\label{theorem1}
	Consider the sequence of iterations $ \left\{x_k \right\}$ at the cloud parameter server according to the Hier-Local-QSGD in Algorithm \ref{algorithm1}. Suppose that Assumptions \ref{assumption1}, \ref{assumption2}, \ref{assumption3} are satisfied, and the loss function $f$ is lower bounded by $f^*$. Further, define $G$ as:
	\begin{align}
	G = &1-L^2\eta^2\left[\frac{\tau_1(\tau_1-1)}{2}+\tau_1\tau_2\left(\frac{\tau_2(\tau_2-1)}{2}+q_1\tau_2 \right) \right] \notag \\
	&- L\eta(1+q_2) \left( \tau_1\tau_2 +\frac{q_1\tau_1}{n} \right),
	\end{align}
	where $q_1$ is the quantization variance parameter for the quantization operator at the client ($Q_1$ in Algorithm \ref{algorithm1}), $q_2$ is the quantization variance parameter for the quantization operator at the edge server ($Q_2$ in Algorithm \ref{algorithm1}), and $K$ is the total number of cloud communication rounds. When $G \geq 0$, the following first-order stationary condition holds for the training algorithm Hier-Local-QSGD: 
	
	\begin{align}
		\frac{1}{K} \sum_{k=0}^{K-1} \mathbb{E} \left \Vert \nabla f(x_k) \right \Vert ^2 & \leq  \frac{2(f(x_0) - f^*)}{\eta K\tau_1\tau_2} \notag  \\
		& + \frac{L^2\eta^2}{2}\left[ \frac{(1+q_1)}{n/s}  \tau_1 (\tau_2-1) 
		 + (\tau_1-1)\right] \sigma^2 \notag \\ &+L\eta\frac{1}{n}(1+q_1)(1+q_2)\sigma^2 .  \label{eq:27}
	\end{align}
	
\end{theorem}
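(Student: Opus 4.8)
The plan is to follow the standard descent-lemma route adapted to the three-level hierarchy. First I would introduce an auxiliary ``virtual'' averaged sequence: at each local step $t=k\tau_1\tau_2+t_2\tau_1+t_1$, define $\bar{x}_t = \frac{1}{n}\sum_{\ell=1}^s\sum_{i\in\mathcal{C}^\ell} x^i_{k,t_2,t_1}$, the global average of all client models, and note that at cloud-aggregation times this coincides (in expectation, over the quantizers) with $x_k$. Using Assumptions \ref{assumption2} and \ref{assumption3} (unbiasedness of $\tilde\nabla f_i$ and of $Q_1,Q_2$), I would show that $\mathbb{E}[\bar{x}_{t+1}-\bar{x}_t \mid \mathcal{F}_t] = -\frac{\eta}{n}\sum_i \nabla f_i(x^i_{\cdot})$, so the averaged sequence performs, in expectation, a perturbed gradient step. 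Then apply the $L$-smoothness descent inequality to $f(\bar{x}_{k+1})$ versus $f(\bar{x}_k)$ across one full cloud round of $\tau_1\tau_2$ steps, split the resulting cross term into $\langle \nabla f(x_k), -\eta\tau_1\tau_2\nabla f(x_k)\rangle$ plus an error, and bound the error by the divergence $\frac{1}{n}\sum_i \mathbb{E}\Vert x^i_{k,t_2,t_1}-x_k\Vert^2$ together with the quantization variances.

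The technical heart is the divergence bound: controlling $\frac{1}{n}\sum_{i}\mathbb{E}\Vert x^i_{k,t_2,t_1}-x_k\Vert^2$ and the analogous edge-level quantity $\frac{1}{n}\sum_i\mathbb{E}\Vert x^i_{k,t_2,t_1}-u^\ell_{k,t_2}\Vert^2$. I would do this in two nested stages. Stage one: within a single edge interval (fixed $k,t_2$), unroll the $t_1$ local SGD steps from the common starting point $u^\ell_{k,t_2}$; since there is no bounded-gradient-norm assumption, I must keep the $\nabla f(x_k)$ term explicit rather than absorbing it, using $\Vert a+b\Vert^2\le(1+\beta)\Vert a\Vert^2+(1+\beta^{-1})\Vert b\Vert^2$ and summing the geometric-type recursion; this yields a bound of order $\eta^2 t_1^2\Vert\nabla f(x_k)\Vert^2 + \eta^2 t_1\sigma^2$ plus a small term that will be reabsorbed. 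Stage two: relate $u^\ell_{k,t_2}$ to $x_k$ by unrolling the $t_2$ edge aggregations, each of which adds $\frac{1}{m^\ell}\sum_i Q_1(\cdot)$; here the $Q_1$-variance feeds in as an extra $q_1$-proportional term per aggregation, and because the $Q_1$ errors across the $\tau_1$-blocks and across clients are (conditionally) independent and mean-zero, their contributions add rather than compound multiplicatively — this is exactly what converts the would-be quadratic $\tau_1^2\tau_2$ term into the linear structure claimed. Summing over $t_2=0,\dots,\tau_2-1$ and $t_1=0,\dots,\tau_1-1$ and dividing by $\tau_1\tau_2$ produces the averaged divergence, with the $\frac{n/s}{}$ (i.e. $m^\ell$) factors appearing from the $\frac{1}{m^\ell}$ averaging of independent client noises.

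Plugging the divergence bound back into the descent inequality, telescoping over $k=0,\dots,K-1$, and dividing by $\eta K\tau_1\tau_2$ gives, after grouping terms, an inequality of the form
\begin{equation*}
G\cdot\frac{1}{K}\sum_{k=0}^{K-1}\mathbb{E}\Vert\nabla f(x_k)\Vert^2 \le \frac{2(f(x_0)-f^*)}{\eta K\tau_1\tau_2} + (\text{noise terms}),
\end{equation*}
where $G$ is precisely the coefficient that multiplies $\Vert\nabla f(x_k)\Vert^2$ after all the $\tau_1^2$-, $\tau_2^2$-, $q_1$-, $q_2$- and $L\eta$-weighted divergence contributions are collected — matching the displayed definition of $G$. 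Requiring $G\ge0$ lets me drop it from the left side (bounding it below by, say, using $\min_k\le\frac1K\sum_k$), yielding \eqref{eq:27}. The main obstacle I anticipate is the bookkeeping in Stage two of the divergence estimate: keeping the $Q_1$-induced terms \emph{additive} across the $\tau_2$ edge rounds requires carefully choosing the conditioning (filtration) so that successive quantization errors are orthogonal, and simultaneously tracking how the per-edge averaging $\frac{1}{m^\ell}$ reduces the SGD variance but not the systematic drift; a naive application of triangle/Jensen inequalities at this step is exactly what produces the loose quadratic bounds in prior work, so the delicate part is choosing which norms to expand with independence versus with $(1+\beta)$-splitting, and selecting $\beta$ (likely $\beta\sim1/\tau$) so that the reabsorbed terms stay inside $G$.
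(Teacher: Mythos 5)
Your plan follows essentially the same route as the paper: an auxiliary unquantized/averaged iterate, the $L$-smoothness descent lemma over one cloud round, a two-stage drift bound in which conditional independence (orthogonality) of the SGD and $Q_1$ quantization errors keeps the variance contributions additive rather than compounding, and absorption of the gradient-weighted drift terms into the coefficient $G \ge 0$ before telescoping. The one notable difference is in the cross-term mechanics: where you propose $(1+\beta)$-splitting with $\beta \sim 1/\tau$, the paper instead applies the identity $2\langle a,b\rangle = \Vert a\Vert^2 + \Vert b\Vert^2 - \Vert a-b\Vert^2$ to $\mathbb{E}\langle \nabla f(x_k), \bar{x}_{k+1}-x_k\rangle$, which generates an exact negative $-\tfrac{1}{2}\mathbb{E}\Vert \nabla f(x^i_{k,t_2,t_1})\Vert^2$ term to absorb the drift and delivers the stated constants without any $\beta$-tuning.
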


\begin{remark} \label{remark1}
	The bound in \eqref{eq:27} can be simplified for specific settings. Specifically, by letting the step size $\eta = \frac{1}{L\sqrt{K\tau_1\tau_2}}$, we have the following convergence rate:
	\small
	\begin{align}
	\frac{1}{K} \sum_{k=0}^{K-1} \mathbb{E} \left \Vert \nabla f(x_k) \right \Vert^2 
	& \leq  \frac{2L(f(x_0) - f^*)}{\sqrt{K\tau_1\tau_2}}  \notag \\
	& + \frac{1}{K\tau_1\tau_2} \frac{1}{2}\left[ \frac{(1+q_1)}{n/s}  \tau_1 (\tau_2-1)+ (\tau_1-1)\right] \sigma^2 \notag \\ & +\frac{1}{\sqrt{K\tau_1\tau_2}}\frac{(1+q_1)(1+q_2)\sigma^2}{n}
	\end{align}
	\normalsize
	This means that the algorithm can achieve an overall convergence rate of $\mathcal{O}(\frac{1}{\sqrt{K\tau_1\tau_2}})$ when the learning rate is sufficiently small. This guarantees that Hier-Local-QSGD can greatly improve the communication efficiency while achieving comparable performance as the baseline algorithm without partial edge aggregation and model quantization for non-convex loss functions \cite{reisizadeh2020fedpaq}.
\end{remark}

\begin{remark} \label{remark2}
	When the condition $G \geq 0$ is satisfied, the optimal parameters to achieve the fastest convergence speed in terms of local update iterations are: $\tau_1 = \tau_2 = 1,\text{ and } q_1 = q_2 = 0$. In this special case, Hier-Local-QSGD degrades to the conventional SGD. Note that this does not mean that the convergence will be the fastest in terms of wall clock time, as the communication latency is different for the edge side update and the cloud side update. Furthermore, the local computation also takes time. The optimal parameters selection depends on the ratio of $\tau_1\text{ and }\tau_2$ and we shall derive an adaptive algorithm to determine the parameters $\tau_1\text{ and } \tau_2$ in Section \ref{Applications}.
\end{remark}

\begin{remark} \label{remark3}
	When $\tau_2 =1, q_1=q_2=0$, which means that there is no partial aggregation nor quantization, we recover the result of \cite{wang2018cooperative}. It is noted that our result does not coincide exactly with the result in \cite{reisizadeh2020fedpaq} for the two-layer \textit{FedPAQ} algorithm when we set $\tau_2=1$, i.e., FedAvg with quantization. This is because the expected gradient norm on the left hand side of \eqref{eq:27} is different. An average of the expected gradient norm for the model parameters after every $\tau_1\tau_2$ updates, i.e. $\{x_k \}_{k=0,\dots,K-1}$, is considered in this paper, while an average of the expected gradient norm for the
	auxiliary virtual model parameters at every update step, i.e., $\{\bar{x}_{k,t} \}_{k=0,\dots,K-1, t=0,\dots, \tau}$, is considered in \cite{reisizadeh2020fedpaq}.
\end{remark}

\begin{remark} \label{remark5}
	One implication of the bound \eqref{eq:27} is that the fewer the number of edge servers, i.e., $s$, the faster the convergence. When the number of participated clients in the FL system is fixed, the partial edge aggregation will incorporate more clients if there are fewer edge servers available in the system. The variance caused by the partial aggregation decreases in this case, and hence the convergence will be faster.
\end{remark}
\begin{remark} \label{remark4}
	For the locally estimated gradient $\tilde{ \nabla} f$, a batch of data of size $b$ can also be used. In this case, the only difference in the analysis is that the variance of the SGD in Assumption \ref{assumption2} decreases from $\sigma^2$ to $\sigma^2 / b$.
\end{remark}

\begin{remark} \label{remark7}
	(Extension to the non-IID data) As stated in Assumption \ref{assumption2}, the locally estimated stochastic gradient is assumed to be an unbiased estimate of the true gradient of the loss function $f(x)$. In FL, this independent and identical data (IID) assumption may not be satisfied. The extension to the non-IID case is non-trivial since the locally estimated gradient direction diversifies among different clients when they are performing local updates, and thus the aggregated gradient descent direction may not be the correct one. By replacing the weak assumption that gradient variance is upper bounded with a strict assumption that the gradient norm is bounded, a convergence result could be derived but the variance term is expected to be quadratic with respect to the local update steps. In this paper, we aim to derive a tighter bound which can be further utilized for system design, and thus, we have the IID assumption in Assumption \ref{assumption2}. 
	
	{
	Relaxing the uniform variance bound $\mathbb{E} [\left \Vert \tilde{ \nabla} f_i(x) - \nabla f(x) \right \Vert^2 |x ]\leq \sigma^2$ to different variance bounds $\sigma_i^2$ in Assumption \ref{assumption2} is straightforward. Our proof can be easily extended to this case and get the following result
	\small
	\begin{align*}
	\frac{1}{K} \sum_{k=0}^{K-1} \mathbb{E} \left \Vert \nabla f(x_k) \right \Vert^2 
	\leq  & \frac{2(f(x_0) - f^*)}{\eta K\tau_1\tau_2} \\
	&+ \frac{L^2\eta^2}{2}\left[ (\tau_1-1)\sigma_c^2 + (1+q_1) \tau_1 (\tau_2-1) \sigma_e^2\right] \\
	& +L\eta\frac{1}{n}(1+q_1)(1+q_2)\sigma_c^2 , \label{eq:34}
	\end{align*}
	\normalsize
	where $\sigma_c ^2 = \frac{1}{n}\sum_{i=1}^{n}\sigma_i^2$, $\sigma_e^2 = \frac{1}{n} \sum_{\ell=1}^{s}[ \frac{1}{m^\ell} \sum_{j\in \mathcal{C}^\ell}\sigma_j^2]$. Unfortunately such extension cannot handle the non-IID data.
	} 
	In Section \ref{Simulations}, we will perform experiments on non-IID data and will show that the adaptive interval selection scheme proposed based on the analytical results works well under the non-IID case.
\end{remark}

\subsection{Examples of System Design Guidelines}
This part provides two examples to illustrate the system design guidelines that can be obtained from our analysis.
\subsubsection{Too much quantization suggests infrequent communication}\label{quantization}
By observing Eq. \eqref{eq:27}, we can rewrite the variance term on the right hand side as follows:

\small
\begin{equation}
	\frac{L^2\eta^2}{2}\left[ \frac{(1+q_1)}{n/s}  \tau_1 \tau_2+ (1-\frac{1+q_1}{n/s} )\tau_1 -1 \right] \sigma^2  +L\eta\frac{(1+q_1)(1+q_2)\sigma^2}{n}.
	\label{eq:32}
\end{equation}
\normalsize
Suppose that $\tau_1\tau_2, q_1 \text{ and } q_2$ are fixed. Then, when $q_1 < n/s -1$, i.e., the quantization output is not too far from the input, then a smaller $\tau_1$ leads to a lower upperbound. That is, frequent local aggregation leads to a faster convergence. On the other hand, when $q_1 > n/s -1$, i.e., the quantization output is very inaccurate, then a larger $\tau_1$ leads to a lower upperbound. That is, infrequent local aggregation leads to a faster convergence rate.  
\par
This is a very interesting observation since in the existing analyses for FedAvg \cite{khaled2020tighter,reisizadeh2020fedpaq}, the local aggregation interval has always been positively correlated with the additional variance term. This means a smaller aggregation interval always leads to a faster convergence rate. On the other hand, our result suggests that when the communication frequency with the cloud server is fixed, how the edge-client aggregation interval influences the convergence rate depends on the edge-client quantization level parameter $q_1$. If there is too much quantization, then less frequent communication between the edge and the client is preferred. This is because when the accumulated error caused by the quantization on the communication between the clients and edge is higher than that caused by the multi-step local update, less communication will be better. In practice, when the communication bandwidth is limited and the quantization must be high, our analytical result suggests that infrequent communication between the client and edge servers is preferred. We will verify this flipped phenomenon of $\tau_1$ in Section \ref{Simulations}.

\subsubsection{Edge-client association strategy has no impact on the convergence}\label{association}

The edge-client association is a unique resource allocation problem in Hierarchical Federated Edge Learning.
This problem has been discussed in \cite{luo2020hfel} with a rough convergence analysis framework to capture the learning performance and to formulate a joint learning and resource allocation problem to accelerate training and save energy. Intuitively, it will be beneficial to the overall learning performance to have many devices connected to each edge server. However, for a given edge server with limited spectrum resources, when more clients are connected to the server, less bandwidth will be assigned to each client, which results in a longer communication delay. Thus, the learning performance and spectrum resource allocation are intertwined with each other, which makes the optimization problem different from the general computation offloading problem in the conventional MEC framework \cite{7541539}.

\par
In this paper, by analyzing the convergence of the Hierarchical FL with the Hier-Local-QSGD algorithm, we find that, when the number of edge servers is fixed and each edge server is associated with at least one client, the convergence speed with respect to (w.r.t.) the iterations is irrelevant to the client-edge association strategy. Thus, to accelerate training of the overall system, we only need to minimize the communication delay for each aggregation. Due to the synchronization requirement in the Hier-Local-QSGD training algorithm at each aggregation step, the delay for each aggregation step is determined by the slowest client.

\par
It must be noted that a recent work \cite{wang2020local} also analyzed the hierarchical local SGD algorithm for general non-convex loss functions. In their convergence result, it was found that the client-distribution will influence the final error bound in contrast to what we observed. This can be explained by the fact that different weight coefficients are used when averaging the updates from the edge to the cloud. In our Hier-Local-QSGD algorithm, the weighted coefficient $m^\ell / n$ is used while in the HF-SGD algorithm in \cite{wang2020local}, the uniform coefficient $1/s$ is used. When performing partial edge aggregation, the additional variance introduced by partial aggregation is inversely proportional to the number of clients to be aggregated, i.e., $m^\ell$. Thus, adopting a weighted average policy at the cloud aggregation step will balance the additional variance that is introduced by the edge server with fewer clients while a uniform average policy fails to do so. We also verify this observation through simulations in Section \ref{Simulations}.

\subsection{Proof Outline}
We now give an outline of the proof for Theorem \ref{theorem1}. Detailed proofs of the lemmas are deferred to Appendix \ref{Appendix}.
\par
To assist the analysis, a virtual auxiliary variable $ \bar{x}_k$ is introduced, which is the average of the unquantized updates from the edge servers and defined as follows:
\begin{equation}
\bar{x }_{k+1}= x_k + \sum_{\ell=1}^s  \frac{m^\ell}{n} (u_{k,\tau_2}^\ell - u_{k,0}^\ell). \label{eq:8}
\end{equation}

The evolution of the true and auxiliary model parameters $x_{k,t_2,t_1}^i$, $ \bar{x }_{k+1}$, and $x_{k+1}$ is specified as follows:
\small
\begin{align}
& x^i_{k,t_2,t_1} \notag \\
& = x_k - \eta \sum_{\beta=0}^{t_1-1} \tilde{\nabla}f_i(x^i_{k,t_2,\beta}) - \eta \sum_{\alpha=0}^{t_2-1}\sum_{j\in \mathcal{C}^{\ell _{i}}} \frac{1}{m^{\ell_i}} Q_1^{(\alpha)}\left [ \sum_{\beta=0}^{\tau_1-1} \tilde{\nabla}f_j(x^j_{k,\alpha,\beta}) \right] \label{eq:9}
\end{align}
\vspace{-1mm}
\begin{equation}
\bar{x}_{k+1} = x_k - \eta \sum_{\ell \in [s]} \frac{m^\ell}{n} 
\frac{1}{m^\ell}\sum_{\alpha=0}^{\tau_2-1}\sum_{j\in \mathcal{C}^{\ell}} Q_1^{(\alpha)}\left [ \sum_{\beta=0}^{\tau_1-1} \tilde{\nabla}f_j(x^j_{k,\alpha,\beta}) \right ]  \label{eq:10}
\end{equation}

\begin{equation}
x_{k+1} = x_k - \eta \sum_{\ell \in [s]} \frac{m^\ell}{n} Q_2\left \{
\frac{1}{m^\ell}\sum_{\alpha=0}^{\tau_2-1}\sum_{j\in \mathcal{C}^{\ell}} Q_1^{(\alpha)}\left [ \sum_{\beta=0}^{\tau_1-1} \tilde{\nabla}f_j(x^j_{k,\alpha,\beta}) \right] \right\}.  \label{eq:11}
\end{equation}
\normalsize
\par
The proof proceeds as follows: Using the property of $L$-smooth functions, we first prove a bound in Lemma \ref{lemma1} of the evolution process of the cloud model parameter $\left\{ x_k \right\}$, which depends on three terms, i.e. $\mathbb{E} \left \langle \nabla f(x_k), \bar{x}_{k+1} - x_k \right \rangle, \mathbb{E} \left \Vert \bar{x}_{k+1} - x_k \right \Vert ^2, \text{and } \mathbb{E} \left \Vert x_{k+1} -\bar{x}_{k+1} \right \Vert^2$. In Lemmas \ref{lemma2}, \ref{lemma3}, and \ref{lemma4}, we derive upper bounds of the three terms, respectively, and characterize their relationships to the aggregation parameters $\tau_1,\text{ and } \tau_2$ along with the quantization variance parameters $q_1, \text{ and }q_2$.
\begin{lemma} [One round of global aggregation]
	\label{lemma1}
	With Assumptions \ref{assumption1} and \ref{assumption2}, we have the following relationship between $x_{k+1}$ and $x_k$:
	\begin{align}
	\mathbb{E} f(x_{k+1}) -\mathbb{E} f(x_{k}) &\leq  \mathbb{E} \left \langle \nabla f(x_k), \bar{x}_{k+1} - x_k \right \rangle \notag \\
	&+ \frac{L}{2} \mathbb{E} \left \Vert \bar{x}_{k+1} - x_k \right \Vert ^2+ \frac{L}{2} \mathbb{E} \left \Vert x_{k+1} -\bar{x}_{k+1} \right \Vert^2 \label{eq:12}
	\end{align}
\end{lemma}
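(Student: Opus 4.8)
The plan is to prove Lemma \ref{lemma1} directly from the $L$-smoothness of $f$ (Assumption \ref{assumption1}), which is the only nontrivial ingredient; Assumption \ref{assumption2} enters only to justify that the expectations below are well-defined. The $L$-smoothness of $f$ implies the standard quadratic upper bound (descent lemma), namely that for all $x,y \in \mathbb{R}^p$,
\begin{equation*}
f(y) \leq f(x) + \langle \nabla f(x), y - x \rangle + \frac{L}{2} \Vert y - x \Vert^2.
\end{equation*}
First I would apply this with $x = x_k$ and $y = x_{k+1}$, giving
\begin{equation*}
f(x_{k+1}) - f(x_k) \leq \langle \nabla f(x_k), x_{k+1} - x_k \rangle + \frac{L}{2} \Vert x_{k+1} - x_k \Vert^2.
\end{equation*}

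Next I would insert the auxiliary variable $\bar{x}_{k+1}$ from \eqref{eq:8} by writing $x_{k+1} - x_k = (\bar{x}_{k+1} - x_k) + (x_{k+1} - \bar{x}_{k+1})$. The inner product term splits linearly into $\langle \nabla f(x_k), \bar{x}_{k+1} - x_k \rangle + \langle \nabla f(x_k), x_{k+1} - \bar{x}_{k+1} \rangle$. The key observation is that, conditioned on $x_k$ and on all the stochastic gradients and the $Q_1$ quantizations up to round $k$, the quantizer $Q_2$ is unbiased (Assumption \ref{assumption3}), so $\mathbb{E}[x_{k+1} - \bar{x}_{k+1} \mid \cdot] = 0$ by comparing \eqref{eq:10} and \eqref{eq:11}; hence the cross term $\mathbb{E}\langle \nabla f(x_k), x_{k+1} - \bar{x}_{k+1} \rangle$ vanishes after taking total expectation (using that $\nabla f(x_k)$ and $\bar{x}_{k+1} - x_k$ are measurable with respect to the conditioning $\sigma$-algebra). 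For the squared-norm term, I would use the identity $\Vert a + b \Vert^2 = \Vert a \Vert^2 + 2\langle a, b\rangle + \Vert b \Vert^2$ with $a = \bar{x}_{k+1} - x_k$ and $b = x_{k+1} - \bar{x}_{k+1}$; again the cross term $2\mathbb{E}\langle \bar{x}_{k+1} - x_k, x_{k+1} - \bar{x}_{k+1}\rangle$ vanishes by the same conditional-unbiasedness argument for $Q_2$. Collecting the surviving terms and taking total expectation yields exactly \eqref{eq:12}.

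The main obstacle is bookkeeping rather than any deep estimate: one must set up the filtration carefully so that the two cross terms genuinely vanish. Concretely, let $\mathcal{F}_k$ be the $\sigma$-algebra generated by everything determining $\bar{x}_{k+1}$ — i.e. $x_k$ together with all sampled stochastic gradients $\tilde{\nabla} f_j(x^j_{k,\alpha,\beta})$ and all client-side quantizations $Q_1^{(\alpha)}$ used in the $k$-th cloud round. Then $\bar{x}_{k+1} - x_k$ and $\nabla f(x_k)$ are $\mathcal{F}_k$-measurable, while $x_{k+1} - \bar{x}_{k+1} = -\eta \sum_{\ell} \frac{m^\ell}{n}\bigl( Q_2(v_k^\ell) - v_k^\ell \bigr)$ with $v_k^\ell := \frac{1}{m^\ell}\sum_{\alpha}\sum_{j\in\mathcal{C}^\ell} Q_1^{(\alpha)}[\sum_\beta \tilde\nabla f_j(x^j_{k,\alpha,\beta})]$ being $\mathcal{F}_k$-measurable, so Assumption \ref{assumption3} gives $\mathbb{E}[x_{k+1} - \bar{x}_{k+1}\mid\mathcal{F}_k] = 0$. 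Then the tower property kills both cross terms. Once this is in place, the rest is the elementary three-term decomposition above, and the bound \eqref{eq:12} follows. I would note that no smoothness of the $f_i$ individually is needed here — only $L$-smoothness of the global $f$ — which is why the lemma is stated with just Assumptions \ref{assumption1} and \ref{assumption2}.
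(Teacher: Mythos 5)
Your proof is correct, but it takes a slightly different route from the paper's. The paper applies the $L$-smoothness descent lemma \emph{twice}, chaining through the auxiliary point: once as $f(x_{k+1}) \leq f(\bar{x}_{k+1}) + \langle \nabla f(\bar{x}_{k+1}), x_{k+1}-\bar{x}_{k+1}\rangle + \frac{L}{2}\Vert x_{k+1}-\bar{x}_{k+1}\Vert^2$ and once as $f(\bar{x}_{k+1}) \leq f(x_k) + \langle \nabla f(x_k), \bar{x}_{k+1}-x_k\rangle + \frac{L}{2}\Vert \bar{x}_{k+1}-x_k\Vert^2$; the single cross term $\mathbb{E}\langle \nabla f(\bar{x}_{k+1}), x_{k+1}-\bar{x}_{k+1}\rangle$ is killed by the conditional unbiasedness of $Q_2$ (noting $\nabla f(\bar{x}_{k+1})$ is measurable with respect to the pre-$Q_2$ randomness), and adding the two inequalities gives \eqref{eq:12}. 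You instead apply the descent lemma once at $x_k$, split $x_{k+1}-x_k$ into $(\bar{x}_{k+1}-x_k)+(x_{k+1}-\bar{x}_{k+1})$, and use the unbiasedness of $Q_2$ twice to annihilate the two resulting cross terms. Both arguments are valid and yield the identical bound; the paper's version needs only one cross term to vanish and never expands a square, while yours has the mild aesthetic advantage of evaluating $\nabla f$ only at the true iterate $x_k$. Your filtration bookkeeping is careful and correct. One point you rightly flag, which applies equally to the paper's own proof: the vanishing of the cross term(s) genuinely requires Assumption \ref{assumption3} (unbiasedness of $Q_2$), so the lemma's hypothesis list of only Assumptions \ref{assumption1} and \ref{assumption2} is incomplete as stated — your proof makes this dependence explicit, which is an improvement in rigor.
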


Lemma \ref{lemma1} follows from the property of the $L-$smoothness in Assumption \ref{assumption1}. We next bound the three terms on the right hand side of Eqn. \eqref{eq:12}. 
\begin{lemma} \label{lemma2}
	With Assumptions \ref{assumption1}, \ref{assumption2} and \ref{assumption3}, $\mathbb{E} \langle \nabla f(x_k), \bar{x}_{k+1} - x_k \rangle$  is bounded as follows:
	\begin{align*}
	&\mathbb{E} \langle \nabla f(x_k), \bar{x}_{k+1} - x_k \rangle \\
	\leq
	&-\frac{\eta D}{2} \times \frac{1}{n} \sum_{i=1}^n \sum_{\alpha=0}^{\tau_2-1}\sum_{\beta=0}^{\tau_1-1} \mathbb{E} \left\Vert \nabla f(x_{k,\alpha, \beta}^i) \right\Vert^2 \\
	&+ \frac{\tau_1 \tau_2}{2} \left[ (\tau_1-1) + \frac{s}{n}(1+q_1) \tau_1 (\tau_2-1) \right] \sigma^2
	\end{align*}
	, where $D= \left\{ 1-L^2\eta^2\left[\frac{\tau_1(\tau_1-1)}{2}+\tau_1\tau_2\left(\frac{\tau_2(\tau_2-1)}{2}+q_1\tau_2 \right) \right] \right\}$.
\end{lemma}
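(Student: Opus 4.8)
\textbf{Proof plan for Lemma~\ref{lemma2}.} The starting point is the explicit expression \eqref{eq:10} for $\bar x_{k+1}-x_k$, from which I would first strip away all randomness by unbiasedness. Taking the conditional expectation given the state $\mathcal F_k$ at the beginning of round $k$ and using the tower rule, the inner quantizer $Q_1$ (Assumption~\ref{assumption3}) and the stochastic gradients (Assumption~\ref{assumption2}) average out, so that $\mathbb{E}[\bar x_{k+1}-x_k\mid\mathcal F_k]=-\frac{\eta}{n}\sum_{i=1}^{n}\sum_{\alpha=0}^{\tau_2-1}\sum_{\beta=0}^{\tau_1-1}\mathbb{E}[\nabla f(x^i_{k,\alpha,\beta})\mid\mathcal F_k]$, where the client partition $\{\mathcal C^\ell\}$ collapses $\sum_\ell\sum_{j\in\mathcal C^\ell}$ to $\sum_{j=1}^n$. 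Since $\nabla f(x_k)$ is $\mathcal F_k$-measurable, $\mathbb{E}\langle\nabla f(x_k),\bar x_{k+1}-x_k\rangle=-\frac{\eta}{n}\sum_{i,\alpha,\beta}\mathbb{E}\langle\nabla f(x_k),\nabla f(x^i_{k,\alpha,\beta})\rangle$. I would then apply the polarization identity $\langle a,b\rangle=\tfrac12\|a\|^2+\tfrac12\|b\|^2-\tfrac12\|a-b\|^2$ to each summand, discard the resulting non-positive $-\tfrac{\eta}{2n}\sum_{i,\alpha,\beta}\mathbb{E}\|\nabla f(x_k)\|^2$ term, and bound the remaining discrepancy by $L$-smoothness (Assumption~\ref{assumption1}) via $\|\nabla f(x_k)-\nabla f(x^i_{k,\alpha,\beta})\|^2\le L^2\|x_k-x^i_{k,\alpha,\beta}\|^2$, obtaining
\[
\mathbb{E}\langle\nabla f(x_k),\bar x_{k+1}-x_k\rangle\le-\tfrac{\eta}{2n}\sum_{i,\alpha,\beta}\mathbb{E}\|\nabla f(x^i_{k,\alpha,\beta})\|^2+\tfrac{L^2\eta}{2n}\sum_{i,\alpha,\beta}\mathbb{E}\|x_k-x^i_{k,\alpha,\beta}\|^2 .
\]
The lemma is thus reduced to bounding the averaged model divergence $\frac1n\sum_{i,\alpha,\beta}\mathbb{E}\|x_k-x^i_{k,\alpha,\beta}\|^2$.

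For the divergence I would unroll \eqref{eq:9}, writing $x^i_{k,\alpha,\beta}-x_k$ as $-\eta$ times the sum of (i) the $\beta$ in-round local stochastic gradients on client $i$ during edge round $\alpha$ and (ii) the $\alpha$ already-completed, quantized and cluster-averaged edge increments $\tfrac1{m^{\ell_i}}\sum_{j\in\mathcal C^{\ell_i}}Q_1^{(a)}\big[\sum_{\gamma=0}^{\tau_1-1}\tilde\nabla f_j(x^j_{k,a,\gamma})\big]$. Splitting every stochastic gradient and every quantizer output into its conditional mean and a conditionally zero-mean fluctuation, and processing the update events one at a time in temporal order (equivalently, a step-by-step recursive estimate based on $\|a-\eta g\|^2$-expansions), the fluctuations contribute additively in mean square and the cross terms stay controlled. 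The conditional-mean part is handled by Cauchy--Schwarz, $\|\sum_{r=1}^{R}v_r\|^2\le R\sum_r\|v_r\|^2$, together with $\|\tfrac1{m^{\ell}}\sum_j v_j\|^2\le\tfrac1{m^{\ell}}\sum_j\|v_j\|^2$, producing multiples of $\mathbb{E}\|\nabla f(x^i_{k,\alpha,\beta})\|^2$; the SGD fluctuations are controlled by the $\sigma^2$ of Assumption~\ref{assumption2}, with the cluster average over the $m^\ell$ clients supplying the $\tfrac1{m^\ell}$ variance reduction; and the quantization fluctuations are controlled by Assumption~\ref{assumption3}, where the pre-quantization norm $\mathbb{E}\|\sum_\gamma\tilde\nabla f_j(x^j_{k,a,\gamma})\|^2$ must itself be split into a gradient-mean block and a $\sigma^2$ block --- this is where the factors $q_1$ and $1+q_1$ enter. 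Taking $\frac1n\sum_i$ finally turns each $\tfrac1{m^{\ell_i}}$ into $\tfrac{s}{n}$ through $\frac1n\sum_i\frac1{m^{\ell_i}}h(\ell_i)=\frac1n\sum_\ell h(\ell)$.

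Substituting this divergence bound back, the gradient-norm contributions, after the $(\alpha,\beta)$ double sum is carried out, collapse into the combinations $\tfrac{\tau_1(\tau_1-1)}{2}$ and $\tau_1\tau_2(\tfrac{\tau_2(\tau_2-1)}{2}+q_1\tau_2)$ and merge with the leading $-\tfrac{\eta}{2n}\sum_{i,\alpha,\beta}\mathbb{E}\|\nabla f(x^i_{k,\alpha,\beta})\|^2$ to produce precisely the factor $D$ of the statement; the variance contributions collect into $\tfrac{\tau_1\tau_2}{2}\big[(\tau_1-1)+\tfrac{s}{n}(1+q_1)\tau_1(\tau_2-1)\big]\sigma^2$.

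I expect the model-divergence estimate to be the main obstacle. Keeping the two aggregation levels disentangled, ensuring that every cross term genuinely vanishes --- the in-round iterates of edge round $\alpha$ depend on the noise accumulated in the earlier edge rounds, so the conditioning order matters --- threading the SGD variance through the inner quantizer so that $q_1$ multiplies both a gradient-norm block and a $\sigma^2$ block, and, most delicately, doing the $(\alpha,\beta)$- and client-bookkeeping tightly enough that the fluctuation terms remain \emph{linear} in the number of local steps (yielding $(\tau_1-1)$ and $\tau_1(\tau_2-1)$ rather than a looser quadratic factor): these are the points that need genuine care. A naive $\|\sum_e u_e\|^2\le E\sum_e\|u_e\|^2$ applied to the noise sum would blow up the variance coefficient and forfeit the tightness the lemma is designed to deliver.
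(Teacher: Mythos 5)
Your plan follows essentially the same route as the paper's proof: unbiasedness of $Q_1$ and of the stochastic gradients reduces the inner product to $-\frac{\eta}{n}\sum_{i,\alpha,\beta}\mathbb{E}\langle\nabla f(x_k),\nabla f(x^i_{k,\alpha,\beta})\rangle$, the polarization identity plus $L$-smoothness reduces the problem to the averaged model divergence, and that divergence is bounded by exactly the paper's mean/fluctuation split (its terms $A$, $B_1$, $B_2$), with the cross terms killed by conditional independence and the same $(\alpha,\beta)$ and $\frac{1}{n}\sum_i\frac{1}{m^{\ell_i}}(\cdot)=\frac{s}{n}$ bookkeeping. The only deviation is that you discard the non-positive $-\frac{\eta}{2}\tau_1\tau_2\,\mathbb{E}\Vert\nabla f(x_k)\Vert^2$ term --- harmless for the lemma as stated, but the paper's own derivation retains it, since it is precisely this term that produces the left-hand side of Theorem~\ref{theorem1} after telescoping.
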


\begin{lemma} \label{lemma3}
	With Assumptions \ref{assumption1}, \ref{assumption2} and \ref{assumption3}, $\mathbb{E} \left \Vert \bar{x}_{k+1} - x_k \right \Vert ^2$ is bounded as follows:
	\begin{align}
	&\mathbb{E} \left \Vert \bar{x}_{k+1} - x_k \right \Vert ^2 \notag \\
	&\leq \eta^2 \left( \tau_1\tau_2 +\frac{q_1\tau_1}{n} \right)\frac{1}{n}\sum_{i=1}^n \sum_{\alpha=0}^{\tau_2-1}\sum_{\beta=0}^{\tau_1-1} \mathbb{E} \left\Vert \nabla f(x_{k,\alpha, \beta}^i) \right\Vert^2 \notag \\
	&+ \eta^2\frac{1}{n}(1+q_1)\tau_1\tau_2\sigma^2
	\end{align}
\end{lemma}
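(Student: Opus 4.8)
The plan is to bound $\mathbb{E}\Vert\bar{x}_{k+1}-x_k\Vert^2$ by peeling off, in turn, the quantization noise and then the stochastic-gradient noise, and finally applying Jensen's inequality to the residual sum of expected gradient norms. Starting from \eqref{eq:10}, write $g_i^{\alpha}:=\sum_{\beta=0}^{\tau_1-1}\tilde{\nabla}f_i(x_{k,\alpha,\beta}^i)$ for the accumulated stochastic gradient of client $i$ over edge-round $\alpha$. Since the client sets $\{\mathcal{C}^{\ell}\}_{\ell=1}^{s}$ partition $\{1,\dots,n\}$, equation \eqref{eq:10} reads $\bar{x}_{k+1}-x_k=-\frac{\eta}{n}\sum_{\alpha=0}^{\tau_2-1}\sum_{i=1}^{n}Q_1^{(\alpha)}[g_i^{\alpha}]$, so that $\mathbb{E}\Vert\bar{x}_{k+1}-x_k\Vert^2=\frac{\eta^2}{n^2}\,\mathbb{E}\Vert\sum_{\alpha,i}Q_1^{(\alpha)}[g_i^{\alpha}]\Vert^2$.

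First I would condition on all the SGD randomness and use that the quantizers $Q_1^{(\alpha)}$ are applied independently across clients and rounds. By Assumption~\ref{assumption3}, $Q_1^{(\alpha)}[g_i^{\alpha}]$ is conditionally unbiased with conditional second moment of its error at most $q_1\Vert g_i^{\alpha}\Vert^2$; since the quantization errors are conditionally zero-mean and mutually independent, all cross terms vanish, giving $\mathbb{E}\Vert\sum_{\alpha,i}Q_1^{(\alpha)}[g_i^{\alpha}]\Vert^2\leq\mathbb{E}\Vert\sum_{\alpha,i}g_i^{\alpha}\Vert^2+q_1\sum_{\alpha,i}\mathbb{E}\Vert g_i^{\alpha}\Vert^2$.

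Next I would remove the SGD noise from both remaining terms. Order the $n\tau_1\tau_2$ local steps $(i,\alpha,\beta)$ into a filtration in which each iterate $x_{k,\alpha,\beta}^i$ --- hence $\nabla f(x_{k,\alpha,\beta}^i)$ --- is measurable with respect to the history preceding the fresh sample used at that step; this is consistent because a client's iterate in round $\alpha$ depends only on quantities produced in rounds strictly before $\alpha$ together with that same client's own earlier samples within round $\alpha$. By Assumption~\ref{assumption2} the increments $\tilde{\nabla}f_i(x_{k,\alpha,\beta}^i)-\nabla f(x_{k,\alpha,\beta}^i)$ then form a martingale-difference sequence with conditional second moment at most $\sigma^2$, hence mutually orthogonal and orthogonal to the history-measurable gradient terms. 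This yields $\mathbb{E}\Vert\sum_{\alpha,i}g_i^{\alpha}\Vert^2\leq\mathbb{E}\Vert\sum_{\alpha,i,\beta}\nabla f(x_{k,\alpha,\beta}^i)\Vert^2+n\tau_1\tau_2\sigma^2$ and likewise $\mathbb{E}\Vert g_i^{\alpha}\Vert^2\leq\tau_1\sum_{\beta=0}^{\tau_1-1}\mathbb{E}\Vert\nabla f(x_{k,\alpha,\beta}^i)\Vert^2+\tau_1\sigma^2$ (the last step also using Jensen). Applying Jensen once more, $\mathbb{E}\Vert\sum_{\alpha,i,\beta}\nabla f(x_{k,\alpha,\beta}^i)\Vert^2\leq n\tau_1\tau_2\sum_{\alpha,i,\beta}\mathbb{E}\Vert\nabla f(x_{k,\alpha,\beta}^i)\Vert^2$. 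Substituting everything back, using $\sum_{\alpha,i}\tau_1\sigma^2=n\tau_1\tau_2\sigma^2$ and $\sum_{\alpha,i,\beta}\mathbb{E}\Vert\nabla f(x_{k,\alpha,\beta}^i)\Vert^2=n\cdot\frac{1}{n}\sum_{i=1}^n\sum_{\alpha=0}^{\tau_2-1}\sum_{\beta=0}^{\tau_1-1}\mathbb{E}\Vert\nabla f(x_{k,\alpha,\beta}^i)\Vert^2$, and collecting the $\frac{\eta^2}{n^2}$ prefactor, the coefficients collapse to $\eta^2(\tau_1\tau_2+\frac{q_1\tau_1}{n})$ on the gradient-norm sum and $\eta^2\frac{1}{n}(1+q_1)\tau_1\tau_2\sigma^2$ on the noise term, which is exactly the claimed bound.

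The main obstacle is the bookkeeping in the noise-peeling step: one must be sure that the stochastic-gradient noises at different local steps are genuinely uncorrelated across \emph{both} clients and edge-aggregation rounds --- and uncorrelated with the gradient terms --- even though an iterate in a later round is a random function of noise realized in earlier rounds through the intervening edge aggregations. Choosing the right ordering of the $n\tau_1\tau_2$ steps so that these form a martingale-difference sequence is what makes the argument rigorous; the rest is Cauchy--Schwarz/Jensen and algebraic collection of terms.
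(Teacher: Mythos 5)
Your proposal is correct and follows essentially the same route as the paper's proof: both expand $\bar{x}_{k+1}-x_k$ via \eqref{eq:10}, apply the bias--variance decomposition $\mathbb{E}\Vert X\Vert^2=\Vert\mathbb{E}X\Vert^2+\mathrm{Var}(X)$ conditionally along the filtration to kill the cross terms, invoke Assumptions \ref{assumption2} and \ref{assumption3} to bound the SGD and quantization variances, and finish with Jensen on the residual gradient sums; the only cosmetic difference is that you peel quantization noise first and SGD noise second, whereas the paper measures the variance of $Q_1^{(t_2)}[\sum_{t_1}\tilde{\nabla}f_i]$ directly against $\sum_{t_1}\nabla f$ and then splits it. One small caution: the quantization errors across rounds are not literally mutually independent (a later round's input depends on earlier quantized outputs), but your cross terms still vanish by conditional unbiasedness and the tower property, exactly as in your martingale-difference treatment of the SGD noise.
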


\begin{lemma} \label{lemma4}
	With Assumptions \ref{assumption1}, \ref{assumption2} and \ref{assumption3}, $E \left \Vert x_{k+1} -\bar{x}_{k+1} \right \Vert^2$ is bounded as follows:
	\begin{align}
	&E \left \Vert x_{k+1} -\bar{x}_{k+1} \right \Vert^2 \notag \\
	&\leq \eta^2 q_2 \left( \tau_1\tau_2 +\frac{q_1\tau_1}{n} \right)\frac{1}{n}\sum_{i=1}^n \sum_{\alpha=0}^{\tau_2-1}\sum_{\beta=0}^{\tau_1-1} \mathbb{E} \left\Vert \nabla f(x_{k,\alpha, \beta}^i) \right\Vert^2 \notag \\
	&+ \eta^2\frac{1}{n}(1+q_1)q_2\tau_1\tau_2\sigma^2
	\end{align}
\end{lemma}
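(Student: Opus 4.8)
I would prove Lemma~\ref{lemma4} as a short corollary of the estimates already assembled for Lemma~\ref{lemma3}, exploiting that the edge-to-cloud quantizer $Q_2$ is the only new source of randomness separating $x_{k+1}$ from $\bar x_{k+1}$. Writing the per-edge aggregated update as $\Delta^\ell_k := \frac{1}{m^\ell}\sum_{\alpha=0}^{\tau_2-1}\sum_{j\in\mathcal{C}^\ell} Q_1^{(\alpha)}\big[\sum_{\beta=0}^{\tau_1-1}\tilde{\nabla} f_j(x^j_{k,\alpha,\beta})\big]$, equations~\eqref{eq:10} and~\eqref{eq:11} read $\bar x_{k+1}=x_k-\eta\sum_{\ell=1}^s\frac{m^\ell}{n}\Delta^\ell_k$ and $x_{k+1}=x_k-\eta\sum_{\ell=1}^s\frac{m^\ell}{n}Q_2(\Delta^\ell_k)$, so subtracting gives the identity $x_{k+1}-\bar x_{k+1}=-\eta\sum_{\ell=1}^s\frac{m^\ell}{n}\big(Q_2(\Delta^\ell_k)-\Delta^\ell_k\big)$, which is the starting point.

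First I would peel off $Q_2$. Let $\mathcal F_k$ denote the $\sigma$-field generated by all stochastic-gradient draws and all client-to-edge quantizations $Q_1^{(\alpha)}$ performed during the $k$-th cloud round, so that each $\Delta^\ell_k$ is $\mathcal F_k$-measurable. Conditioning on $\mathcal F_k$, Assumption~\ref{assumption3} applied to $Q_2$ gives $\mathbb{E}[Q_2(\Delta^\ell_k)-\Delta^\ell_k\mid\mathcal F_k]=0$ and $\mathbb{E}[\Vert Q_2(\Delta^\ell_k)-\Delta^\ell_k\Vert^2\mid\mathcal F_k]\le q_2\Vert\Delta^\ell_k\Vert^2$; since the quantizers used at distinct edge servers draw independent randomness, the cross terms in the conditional second moment vanish, so $\mathbb{E}[\Vert x_{k+1}-\bar x_{k+1}\Vert^2\mid\mathcal F_k]=\eta^2\sum_{\ell=1}^s\big(\frac{m^\ell}{n}\big)^2\mathbb{E}[\Vert Q_2(\Delta^\ell_k)-\Delta^\ell_k\Vert^2\mid\mathcal F_k]\le\eta^2 q_2\sum_{\ell=1}^s\big(\frac{m^\ell}{n}\big)^2\Vert\Delta^\ell_k\Vert^2$. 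Taking total expectation, $\mathbb{E}\Vert x_{k+1}-\bar x_{k+1}\Vert^2\le\eta^2 q_2\sum_{\ell=1}^s\big(\frac{m^\ell}{n}\big)^2\mathbb{E}\Vert\Delta^\ell_k\Vert^2$.

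It then remains to bound $\eta^2\sum_{\ell}\big(\frac{m^\ell}{n}\big)^2\mathbb{E}\Vert\Delta^\ell_k\Vert^2$ by the right-hand side of Lemma~\ref{lemma3}; multiplying that bound by $q_2$ reproduces the claim of Lemma~\ref{lemma4} verbatim. This last bound is obtained exactly as in the proof of Lemma~\ref{lemma3}: use $m^\ell\le n$ so that $\big(\frac{m^\ell}{n}\big)^2\le\frac{m^\ell}{n}$, split $\Delta^\ell_k$ into its gradient mean $\frac{1}{m^\ell}\sum_\alpha\sum_{j\in\mathcal{C}^\ell}\sum_\beta\nabla f(x^j_{k,\alpha,\beta})$ and its accumulated SGD-and-$Q_1$ noise, bound the mean part with the inequality $\Vert\sum_{i=1}^N a_i\Vert^2\le N\sum_{i=1}^N\Vert a_i\Vert^2$ over its $\tau_1\tau_2 m^\ell$ summands, and bound the noise part with Assumption~\ref{assumption2} (for the conditionally zero-mean, per-step-independent SGD noise) and Assumption~\ref{assumption3} (for $Q_1$, which brings in the factor $1+q_1$). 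One checks that, with $m^\ell\le n$, these termwise estimates land on precisely the coefficient $(\tau_1\tau_2+\frac{q_1\tau_1}{n})$ on $\frac1n\sum_i\sum_\alpha\sum_\beta\mathbb{E}\Vert\nabla f(x^i_{k,\alpha,\beta})\Vert^2$ and $\frac{(1+q_1)\tau_1\tau_2}{n}$ on $\sigma^2$, i.e., the right-hand side of Lemma~\ref{lemma3}. The one delicate point --- and where I expect the real work to sit --- is that within a single cloud round $x^j_{k,\alpha,\beta}$ already depends on the noise injected at sub-rounds $\alpha'<\alpha$, so the split of $\Vert\Delta^\ell_k\Vert^2$ into a gradient-mean part and a noise part cannot be done by a single conditioning but requires the nested/recursive filtration argument already carried out for Lemma~\ref{lemma3}; accordingly I would not re-derive it, but simply note that the same chain of inequalities that yields the Lemma~\ref{lemma3} bound on $\eta^2\mathbb{E}\Vert\sum_\ell\frac{m^\ell}{n}\Delta^\ell_k\Vert^2$ yields the identical bound on $\eta^2\sum_\ell\big(\frac{m^\ell}{n}\big)^2\mathbb{E}\Vert\Delta^\ell_k\Vert^2$, which is exactly why Lemma~\ref{lemma4} is $q_2$ times Lemma~\ref{lemma3}.
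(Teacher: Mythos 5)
Your proposal is correct and follows essentially the same route as the paper: isolate $x_{k+1}-\bar x_{k+1}=-\eta\sum_{\ell}\frac{m^\ell}{n}\bigl(Q_2(\Delta^\ell_k)-\Delta^\ell_k\bigr)$, use unbiasedness and independence of the per-edge quantizers to drop cross terms and pull out the factor $q_2$, then bound $\sum_\ell\bigl(\frac{m^\ell}{n}\bigr)^2\mathbb{E}\Vert\Delta^\ell_k\Vert^2$ via $m^\ell\le n$ and the same mean--variance decomposition used for Lemma~\ref{lemma3}. Your observation that the resulting bound is exactly $q_2$ times that of Lemma~\ref{lemma3} matches the paper's computation, and your explicit conditioning on $\mathcal F_k$ is if anything slightly more careful than the paper's presentation.
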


By combining Lemmas \ref{lemma1} to \ref{lemma4}, we now have the following:
\small
\begin{align}
& \mathbb{E} f(x_{k+1}) -\mathbb{E} f(x_{k}) \leq - \frac{\eta}{2} \tau_1\tau_2 \mathbb{E} \left \Vert \nabla f(x_k) \right \Vert^2 \notag \\
& -\frac{\eta D}{2} \times \frac{1}{n}\sum_{i=1}^n \sum_{\alpha=0}^{\tau_2-1}\sum_{\beta=0}^{\tau_1-1} \mathbb{E} \left\Vert \nabla f(x_{k,\alpha, \beta}^i) \right\Vert^2 \notag \\
& + \frac{L^2\eta^3}{4} \tau_1 \tau_2\left[ (\tau_1-1) + \frac{s}{n}(1+q_1) \tau_1 (\tau_2-1) \right] \sigma^2 \notag \\ &+\frac{L\eta^2}{2}\frac{1}{n}(1+q_1)(1+q_2)\tau_1\tau_2\sigma^2.
\end{align}
\normalsize

For a sufficiently small $\eta$, and when the following condition is satisfied:
\small
\begin{equation}
D- L\eta(1+q_2) \left( \tau_1\tau_2 +\frac{q_1\tau_1}{n} \right) \geq 0,
\end{equation}
\normalsize
we have
\small
\begin{align}
\mathbb{E} f(x_{k+1}) -\mathbb{E} f(x_{k})& \leq - \frac{\eta}{2} \tau_1\tau_2 \mathbb{E} \left \Vert \nabla f(x_k) \right \Vert^2 \notag \\ 
+ & \frac{L^2\eta^3}{4} \tau_1 \tau_2\left[ (\tau_1-1) + \frac{s}{n}(1+q_1) \tau_1 (\tau_2-1) \right] \sigma^2 \notag \\
 +&\frac{L\eta^2}{2}\frac{1}{n}(1+q_1)(1+q_2)\tau_1\tau_2\sigma^2. \label{eq:24}
\end{align}
\normalsize
By summing \eqref{eq:24} and re-arranging the terms, we obtain the main result in Theorem \ref{theorem1}.
\par
Now, we have derived the convergence result for the proposed Hier-Local-QSGD algorithm w.r.t. the update iterations, i.e., $k$. Next, by applying the theoretical analysis to a system design problem, i.e., the aggregation interval selection problem, we will illustrate how the analytical result can be used to reduce the overall training latency in hierarchical federated learning.

\begin{table*}
	\centering
	\begin{tabular}{|c | c |c | c| c| c | c |c | c| c|} 
		\hline
		\scriptsize{Sparsification} & \multicolumn{3}{|c|}{$q_1=0$} & \multicolumn{3}{|c|}{$q_1=19$} & \multicolumn{3}{|c|}{$q_1=65.57$} \\ 
		\hline
		\text{$\tau_1\times\tau_2$ } & 125*2 & 50*5 & 10*25& 125*2 & 50*5 & 10*25 &125*2 & 50*5 & 10*25 \\ 
		\hline
		$\alpha=100$ & 0.8795 & 0.8870 & 0.8930 & 0.8643 & 0.8722 & 0.8930 &0.8010 &0.7772& 0.6810 \\ 
		\hline
		$\alpha=1$ & 0.8721 & 0.8772 & 0.8882 & 0.8583  &0.8593 & 0.8613 &0.8025 & 0.7703 & 0.6648 \\ 
		\hline
		$\alpha=0.1$ & 0.8173 & 0.8256 & 0.8453 & 0.7979 & 0.8000& 0.8099 &0.7042 & NaN & NaN \\ 
		\hline
	\end{tabular}
	\vspace{5pt}
	\caption{{Accuracy w.r.t. communication round for different values of $\tau_1$ with $q_1 = 0$, $q_1 = 19$  and $q_1 = 65.67$. $n=20,s=4, \tau_1\tau_2=250,\text{ and } q_2=0$, on a standard \textit{CIFAR-10} dataset.}	}
	\label{table2}
	\vspace{20pt}
	\centering
	\begin{tabular}{|c | c |c | c| c| c | c |c | c| c|} 
		\hline
		\scriptsize{Rounding} & \multicolumn{3}{|c|}{$8$bits} & \multicolumn{3}{|c|}{$4$ bits} & \multicolumn{3}{|c|}{$2$ bits} \\ 
		\hline
		\text{$\tau_1\times\tau_2$ } & 125*2 & 50*5 & 10*25& 125*2 & 50*5 & 10*25 &125*2 & 50*5 & 10*25 \\ 
		\hline
		$\alpha=100$ & 0.8792 & 0.8866 & 0.8927 & 0.8339 & 0.8248 & 0.7827 & 0.7333 &0.6973& 0.6092\\ 
		\hline
	\end{tabular}
	\vspace{5pt}
	\caption{{{Accuracy w.r.t. communication round for different values of $\tau_1$ with quantization levels as $b_1 = 8$bits, $b_1 = 4$bits  and $b_1 = 2$ bits. $n=20,s=4, \tau_1\tau_2=250,\text{ and } q_2=0$, IID split ($\alpha = 100$) on a standard \textit{CIFAR-10} dataset.}}}
	\label{table3}
\end{table*}

\begin{figure*}[t]
	\centering
	\begin{subfigure}{.25\textwidth}
		\captionsetup{width=0.9\textwidth}
		\centering
		\includegraphics[width=0.9\linewidth]{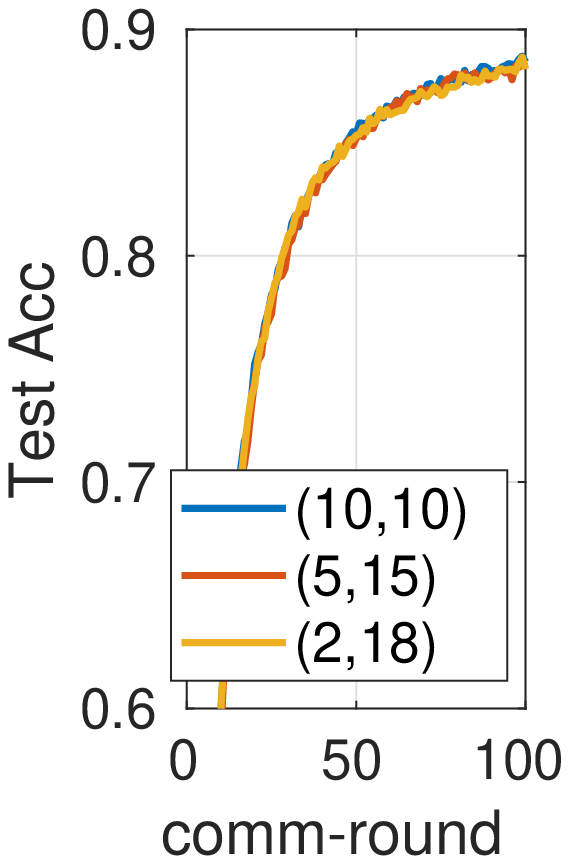}
		\caption{weighted average, $\alpha=100$.}
		\label{figsim1:sub1}
	\end{subfigure}
	\begin{subfigure}{.24\textwidth}
		\captionsetup{width=0.9\textwidth}
		\centering
		\includegraphics[width=0.9\linewidth]{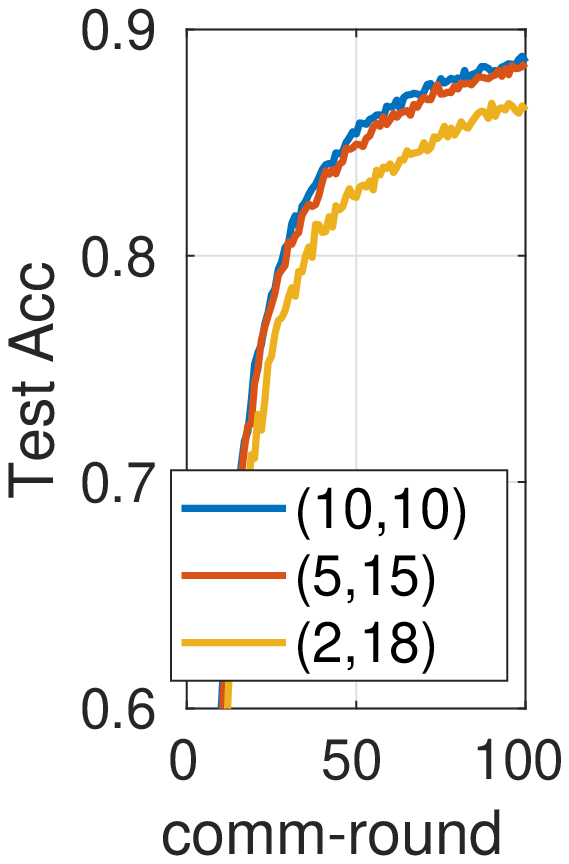}
		\caption{uniform average \cite{luo2020hfel},  $\alpha=100$.}
		\label{figsim1:sub2}
	\end{subfigure}%
	\begin{subfigure}{.24\textwidth}
		\captionsetup{width=0.9\textwidth}
		\centering
		\includegraphics[width=0.9\linewidth]{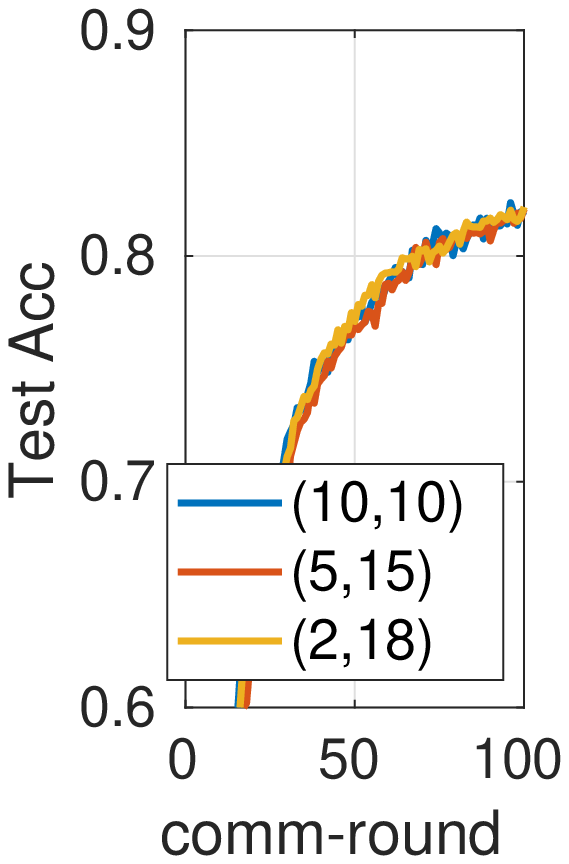}
		\caption{weighted average, $\alpha=0.1$.}
		\label{figsim1:sub3}
	\end{subfigure}
	\begin{subfigure}{.24\textwidth}
		\captionsetup{width=0.9\textwidth}
		\centering
		\includegraphics[width=0.9\linewidth]{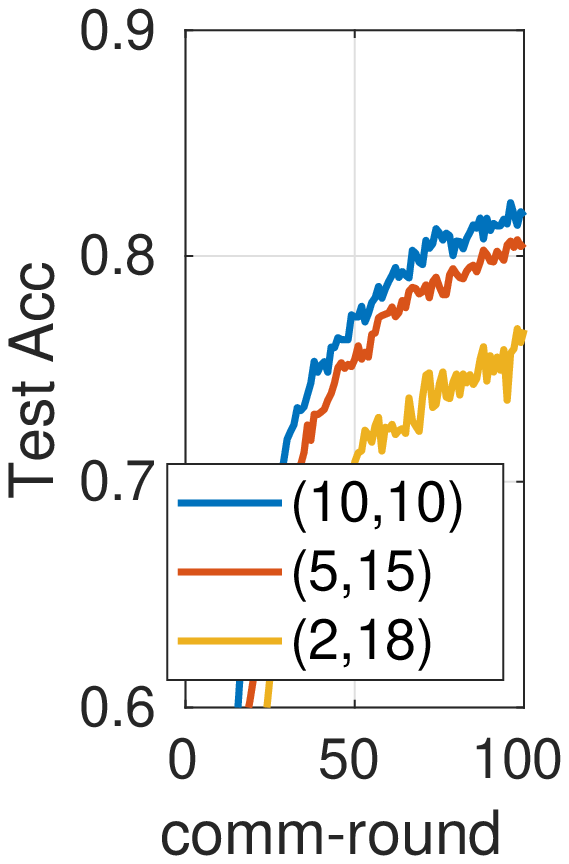}
		\caption{uniform average \cite{luo2020hfel}, $\alpha=0.1$.}
		\label{figsim1:sub4}
	\end{subfigure}%
	\vspace{20pt}
	\caption{{Accuracy with different edge-client association strategies using weighted average and uniform average. $\tau_1= 50,\tau_2 = 5, q_2 =q_1 = 0, n=20,\text{ and } s = 2$, on a standard \textit{CIFAR-10} dataset.}}
	\label{fig:sim1}
\end{figure*}

\section{Adaptive Aggregation interval Control} \label{Applications}
In this section, we illustrate the application of the convergence analysis to investigate the aggregation interval selection problem, i.e., how to optimize the system parameters $\tau_1\text{ and } \tau_2$. To focus on the aggregation interval control, we assume that the two quantization variance parameters $q_1\text{ and } q_2$ are fixed throughout the training process.


\par
To characterize the trade-off between the learning performance and the communication efficiency, we assume that the clients are with the same computation and communication resources. Further assume that the local computation time for one SGD iteration is $D_{comp}$, the communication delay of transmitting a quantized model updates between the client (device) and edge is $D_{de}$, and the communication delay of transmitting a quantized model between the edge and cloud is $D_{ec}$. Then, the wall clock time $T$ of $K$ rounds of cloud-aggregation is given by
$
	T = K \left( \tau_1\tau_2 D_{comp} +\tau_2 D_{de}  + D_{ec} \right).
$
By substituting $T$ into \eqref{eq:27}, the minimal expected squared gradient norm is bounded by:
\small
\begin{align}
&\frac{2(f(x_0) - f^*)}{T} \left( D_{comp} + \frac{D_{de}}{\tau_1} + \frac{D_{ec}}{\tau_1\tau_2}\right) \notag \\
+& \frac{L^2\eta^2}{2}\left[ \frac{1+q_1}{n/s}  \tau_1 (\tau_2-1)+ (\tau_1-1)\right] \sigma^2  +L\eta\frac{1}{n}(1+q_1)(1+q_2)\sigma^2  \label{eq:29}
\end{align}
\normalsize
From the bound in \eqref{eq:29}, we can clearly see the accuracy-latency trade-off when choosing different values of $\tau_1,\text{ and } \tau_2$. 
For a given setting and a specific performance requirement, e.g., a deadline of training, we can determine the values of these key parameters accordingly through minimizing \eqref{eq:29} by setting the derivatives w.r.t. the corresponding parameters to zero.

\begin{theorem}
	For Hier-Local-QSGD, with the same assumptions as Theorem \ref{theorem1}, the error bound in \eqref{eq:29}  is minimized when we select the two aggregation intervals as:
	\begin{equation}
		\tau_1^* = \sqrt{\frac{4(f(x_0)-f^*)D_{de}}{\eta^3L^2\sigma^2T(1-\frac{1+q_1}{n/s})}}, \quad
		\tau_2^* =  \sqrt{\frac{D_{ec}}{D_{de}}\frac{(1-\frac{1+q_1}{n/s})}{\frac{1+q_1}{n/s}}}
		\label{eq:31}
	\end{equation}
when $1+q_1<n/s$.
\end{theorem}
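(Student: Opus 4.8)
The plan is to treat \eqref{eq:29} as a continuous optimization problem in $(\tau_1,\tau_2)$, to exploit its posynomial (geometric-programming) structure so that the unique interior critical point is certified as the global minimizer, and then to solve the two first-order conditions by a substitution that decouples them. First I would drop the last term of \eqref{eq:29}, since it does not depend on $\tau_1$ or $\tau_2$, and restore a factor of $\eta$ that is implicit in the statement: substituting $K=T/(\tau_1\tau_2 D_{comp}+\tau_2 D_{de}+D_{ec})$ into \eqref{eq:27} turns the first term into $\tfrac{2(f(x_0)-f^*)}{\eta T}\bigl(D_{comp}+\tfrac{D_{de}}{\tau_1}+\tfrac{D_{ec}}{\tau_1\tau_2}\bigr)$. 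Writing $A=\tfrac{2(f(x_0)-f^*)}{\eta T}$, $B=\tfrac{L^2\eta^2\sigma^2}{2}$, and $c=\tfrac{1+q_1}{n/s}$, the quantity to be minimized is
\[
g(\tau_1,\tau_2)=AD_{comp}-B+\frac{AD_{de}}{\tau_1}+\frac{AD_{ec}}{\tau_1\tau_2}+Bc\,\tau_1\tau_2+B(1-c)\,\tau_1 .
\]
Relaxing $\tau_1,\tau_2$ to positive reals and setting $\tau_i=e^{u_i}$, each non-constant term becomes an exponential of an affine function of $(u_1,u_2)$ and hence convex; when $1+q_1<n/s$ the four coefficients $AD_{de},AD_{ec},Bc,B(1-c)$ are all strictly positive, so $g$ is a nonnegative combination of convex functions and is coercive in $(u_1,u_2)$. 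Consequently $g$ has a unique global minimizer, which coincides with its unique stationary point.

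It then remains to solve $\nabla g=0$. From $\partial g/\partial\tau_2=0$ one gets $-AD_{ec}/(\tau_1\tau_2^2)+Bc\,\tau_1=0$, i.e. $Bc\,\tau_1^2\tau_2^2=AD_{ec}$. From $\partial g/\partial\tau_1=0$, after multiplying by $\tau_1^2$, one gets $-AD_{de}-AD_{ec}/\tau_2+Bc\,\tau_1^2\tau_2+B(1-c)\tau_1^2=0$; since the first relation yields $Bc\,\tau_1^2\tau_2=AD_{ec}/\tau_2$, the two $D_{ec}$ terms cancel and one is left with $B(1-c)\tau_1^2=AD_{de}$, hence $\tau_1^*=\sqrt{AD_{de}/(B(1-c))}$; substituting $A$ and $B$ reproduces the stated $\tau_1^*$ (the power $\eta^3$ being exactly the effect of the restored $\eta$). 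Dividing $(\tau_1\tau_2)^2=AD_{ec}/(Bc)$ by $\tau_1^{*2}=AD_{de}/(B(1-c))$ gives $\tau_2^{*2}=\tfrac{D_{ec}}{D_{de}}\cdot\tfrac{1-c}{c}$, which is the stated $\tau_2^*$. Both expressions are real and positive exactly when $0<c<1$, i.e. $1+q_1<n/s$, which is the hypothesis and is also the regime in which the convexity/coercivity argument above applies.

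The calculation is largely routine; the two points that require care are (i) identifying the posynomial structure so that global optimality follows from a log-change of variables rather than from an indefinite-looking Hessian in the original coordinates, and (ii) the algebraic step that decouples the first-order system, using the $\tau_2$-equation to annihilate the $D_{ec}$ contribution to the $\tau_1$-equation. I would also flag two caveats. First, $\tau_1$ and $\tau_2$ are positive integers in Algorithm \ref{algorithm1}, so the theorem describes the continuous relaxation and in practice one rounds $\tau_1^*,\tau_2^*$ to nearby feasible integers. Second, when $1+q_1\ge n/s$ the coefficient $B(1-c)$ is nonpositive, the system has no interior stationary point ($\tau_1^{*2}$ would be negative), and, consistently with the ``too much quantization'' discussion in Section \ref{quantization}, the minimizer of the relaxation sits on the boundary $\tau_2=1$; the closed form then no longer holds.
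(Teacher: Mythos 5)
Your proposal is correct, and it arrives at the same closed forms by a route that is close to, but not identical with, the paper's. The paper's own proof is a one-liner: substitute $\tau=\tau_1\tau_2$, observe that \eqref{eq:29} then becomes a separable sum of univariate convex functions of $\tau_1$ and $\tau$ (terms of the form $a/\tau_1$, $b/\tau$, plus linear terms), so joint convexity is immediate and the two first-order conditions decouple with no algebra: $B(1-c)\tau_1^2=AD_{de}$ and $Bc\,\tau^2=AD_{ec}$, whence $\tau_2^*=\tau^*/\tau_1^*$. You instead keep the variables $(\tau_1,\tau_2)$, certify global optimality of the unique stationary point via the posynomial/log-coordinate argument (which is valid: the four exponents $-u_1$, $-(u_1+u_2)$, $u_1+u_2$, $u_1$ give a positive-definite Hessian and coercivity in every direction, provided $0<c<1$), and then decouple the first-order system by hand; your cancellation of the $D_{ec}$ terms in the $\tau_1$-equation is exactly what the paper's substitution accomplishes automatically. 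What your version buys is a more careful global-optimality certificate than the paper's unproved ``it can be easily proved that \eqref{eq:29} is convex,'' plus three worthwhile observations the paper glosses over: the factor $\eta$ missing from the first term of \eqref{eq:29} as printed (needed to reconcile with the $\eta^3$ in \eqref{eq:31}), the fact that the result is a continuous relaxation of an integer choice, and the boundary behavior when $1+q_1\ge n/s$, which connects the theorem's hypothesis to the discussion in Section~\ref{quantization}. What the paper's substitution buys is brevity and the avoidance of any Hessian or change-of-variables argument altogether.
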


\begin{proof}
	Denoting $\tau_1\tau_2 = \tau$, then it can be easily proved that \eqref{eq:29} is convex w.r.t $\tau_1 \text{ and }\tau$ when $1+q_1<n/s$. Then, by setting the partial derivatives w.r.t $\tau_1 \text{ and } \tau$ to 0,
	we can get the solution in Eq. \eqref{eq:31}.
\end{proof}

\par
{By adopting a similar idea from \cite{MLSYS2019Wang}}, in the adaptive interval selection scheme, the whole training procedure is split into uniform wall clock time intervals with the same wall clock time length $T_0$. At the beginning of each time interval, we will use \eqref{eq:29} to estimate the best aggregation interval for the next $T_0$ time period. 
{
	The optimal local update step value, i.e., $(\tau_1^j)^*$ in the $j$-th adapt wall clock time interval, i.e., $t\in \left( (j-1)T_0, jT_0 \right)$ is then $\sqrt{\frac{4(f(x_{t=(j-1)T_0})-f^*)D_{de}}{\eta^3L^2\sigma^2T(1-\frac{1+q_1}{n/s})}}$. The values of the Lipschitz constant $L$, stochastic gradient variance $\sigma^2$ and the lower bound of the non-negative loss $f^*$ are unknown. However, by approximating $f^*$ by $0$, the rest unknown parameters can be canceled with division. Such approximation is also adopted in \cite{fasttcom20,MLSYS2019Wang} . By dividing  $(\tau_1^j)^*$ by $(\tau_1^0)^*$,
}
we can have the following update rule for $t\in \left( (j-1)T_0, jT_0 \right)$:
\begin{equation}
	\tau_1^j =  \left\lceil \sqrt{\frac{f(x_{t=(j-1)T_0})}{f(x_{t=0})}}\tau_1^0 \right\rceil, \quad
	\tau_2^j= \left\lceil \sqrt{\frac{D_{ec}}{D_{de}}\frac{(1-\frac{1+q_1}{n/s})}{\frac{1+q_1}{n/s}}}  \right\rceil
	\label{eq:30}
\end{equation}
where $\tau_1^0$ is the predefined value of $\tau_1$ for the first time interval $t\in \left( 0, T_0 \right)$. 
{The computation of $\tau_1$ only requires the value of the training loss. The adaptive procedure is monitored by the server, which collects the local training loss from each client and computes the update of $\tau_1$ and then sends it back to clients. The additional communication, i.e., uploading and downloading one scalar value between the clients and server, is negligible compared to the cost of the model parameters.
}

\begin{figure*}[t]
	\centering
	\begin{subfigure}{.33\textwidth}
		\captionsetup{width=0.8\textwidth}
		\centering
		\includegraphics[width=.9\linewidth]{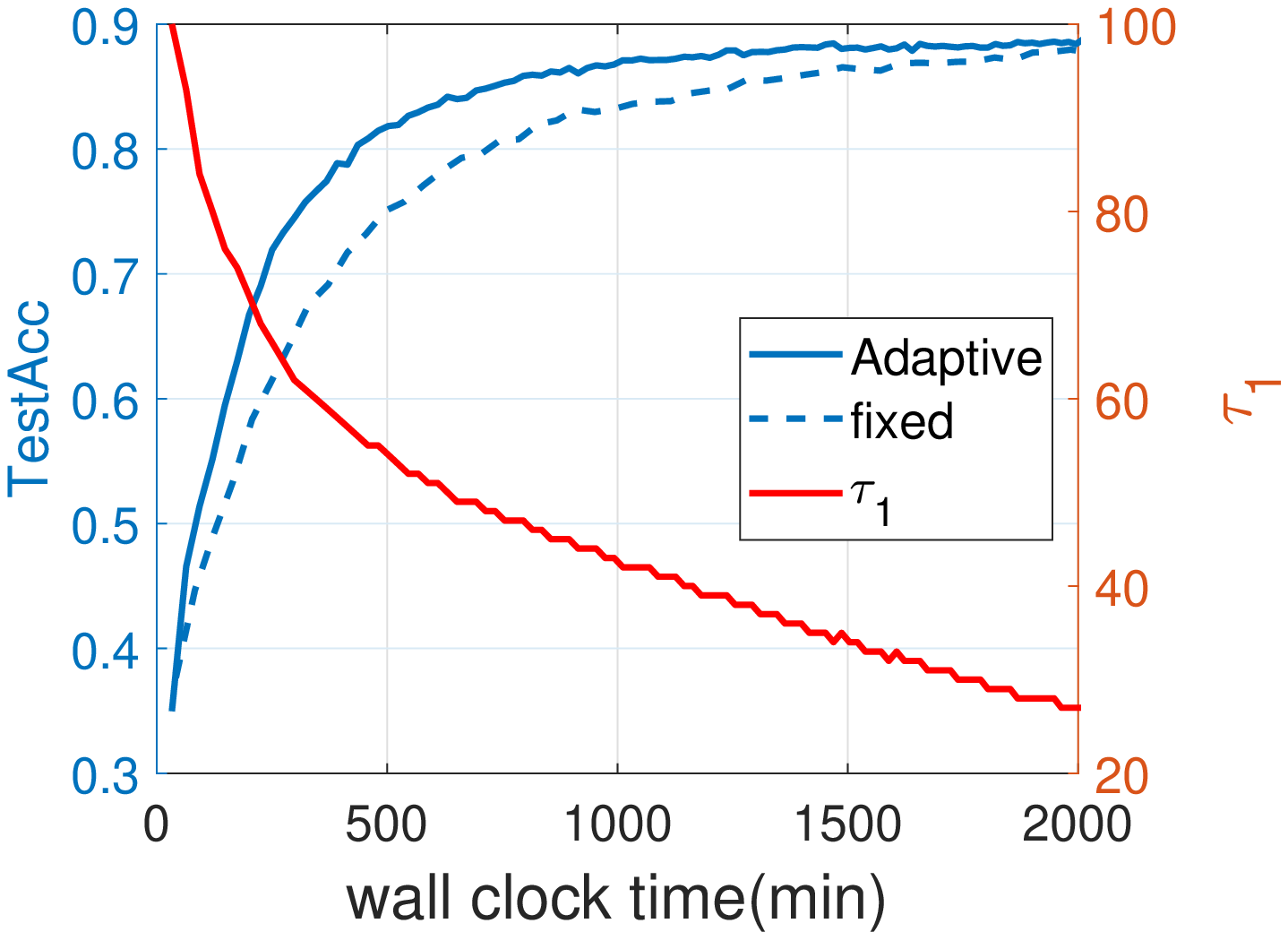}
		\caption{\textit{CIFAR-10}, $\alpha=100$}
		\vspace{20pt}
		\label{figsim3:sub1}
	\end{subfigure}%
	\begin{subfigure}{.33\textwidth}
		\captionsetup{width=0.8\textwidth}
		\centering
		\includegraphics[width=.9\linewidth]{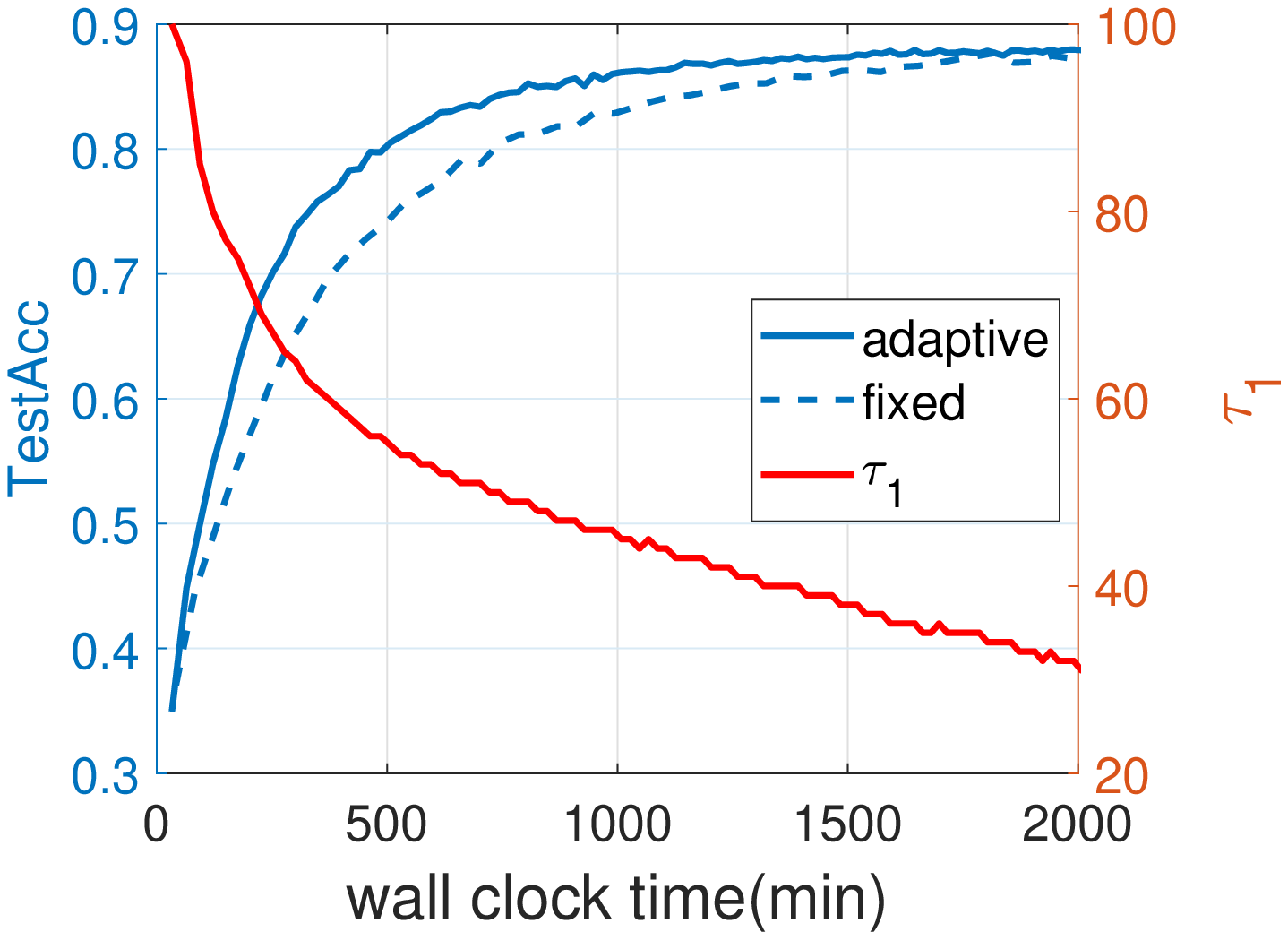}
		\caption{\textit{CIFAR-10}, $\alpha=1$}
		\vspace{20pt}
		\label{figsim3:sub2}
	\end{subfigure}
	\begin{subfigure}{.33\textwidth}
		\captionsetup{width=0.8\textwidth}
		\centering
		\includegraphics[width=.9\linewidth]{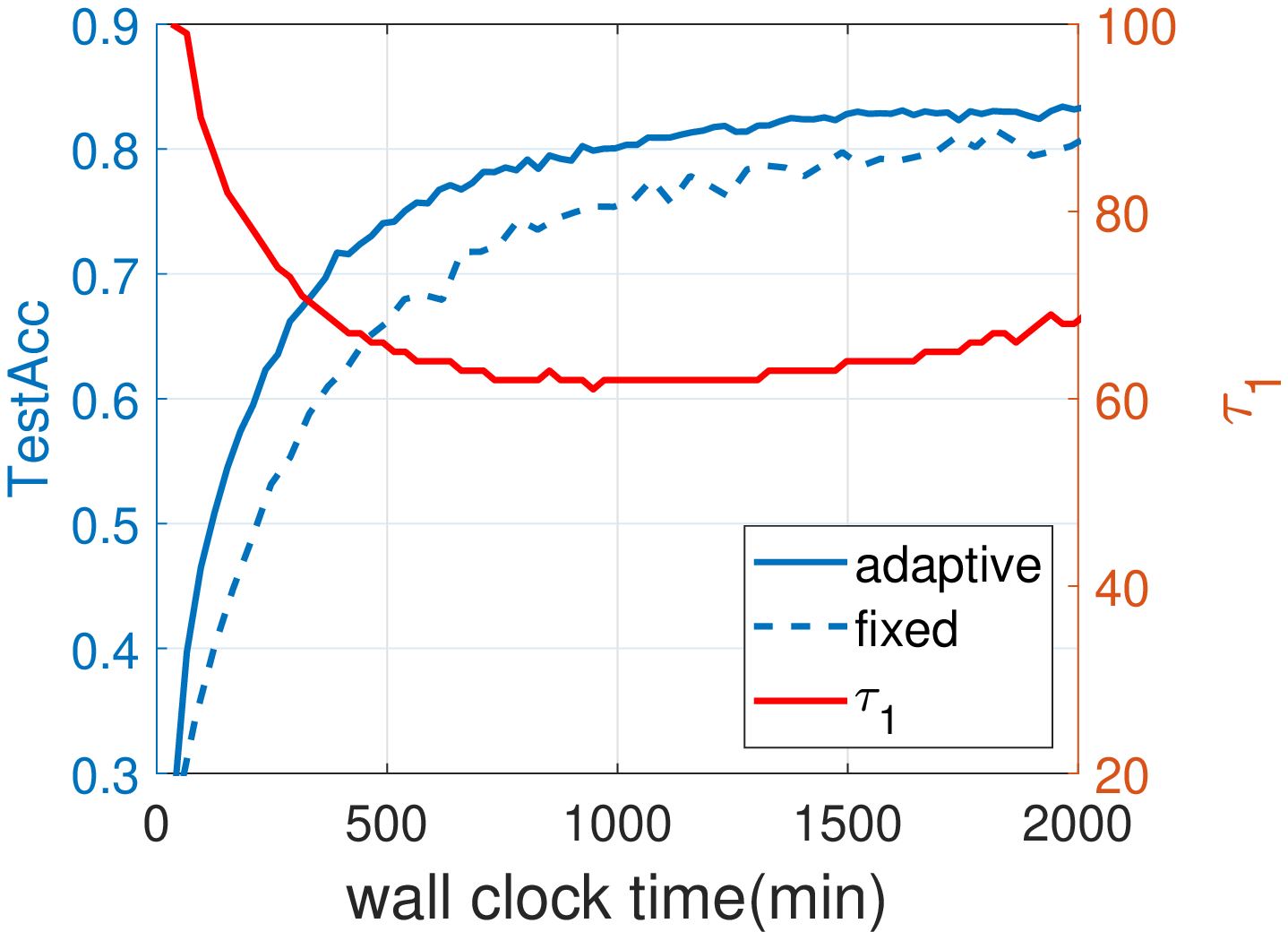}
		\caption{\textit{CIFAR-10}, $\alpha=0.1$}
		\vspace{20pt}
		\label{figsim3:sub3}
	\end{subfigure}
	\begin{subfigure}{.33\textwidth}
		\captionsetup{width=0.8\textwidth}
		\centering
		\includegraphics[width=.9\linewidth]{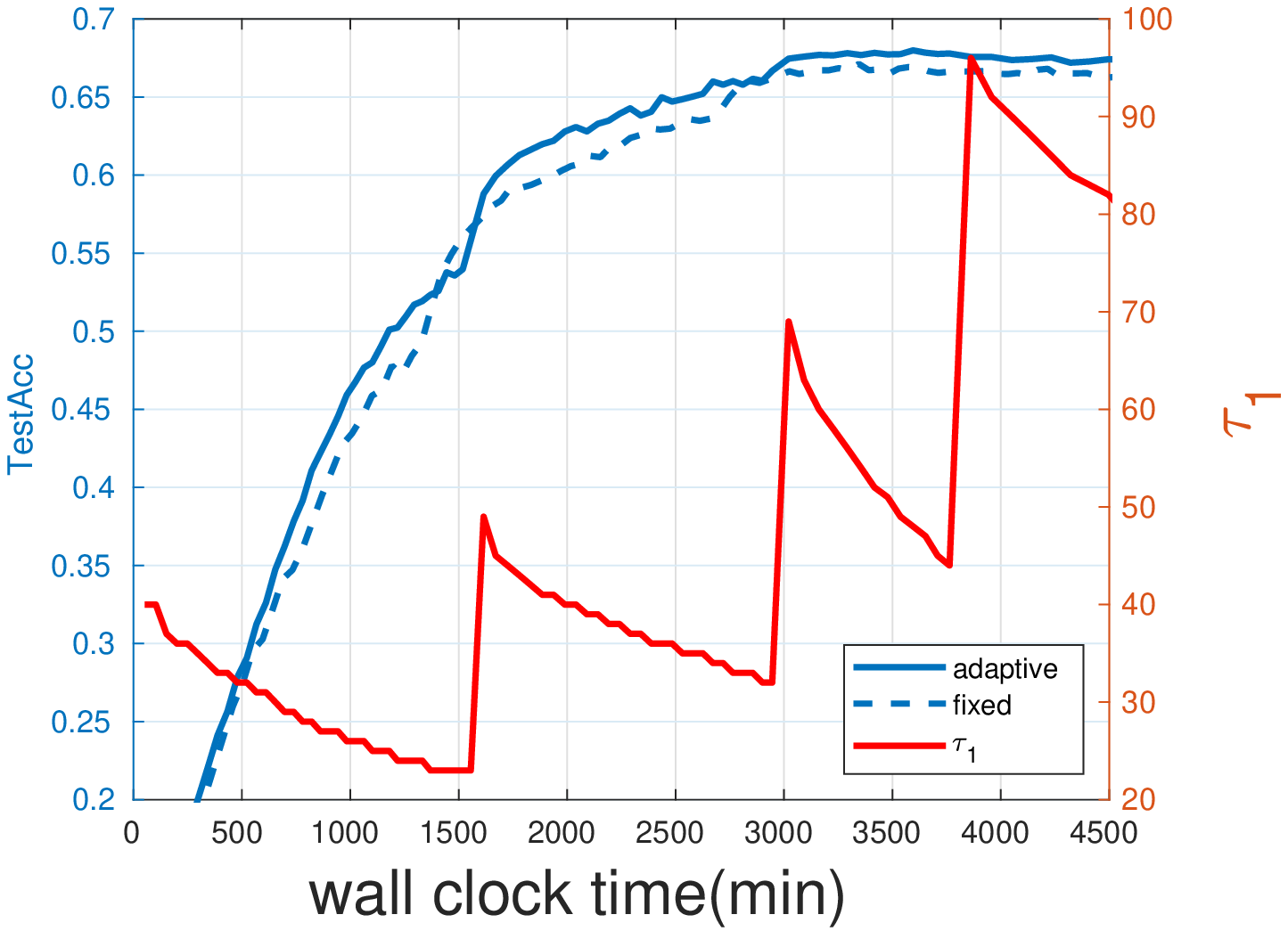}
		\caption{\textit{CIFAR-100}, $\alpha=100$.}
		\vspace{20pt}
		\label{figsim4:sub1}
	\end{subfigure}%
	\begin{subfigure}{.33\textwidth}
		\captionsetup{width=0.8\textwidth}
		\centering
		\includegraphics[width=.9\linewidth]{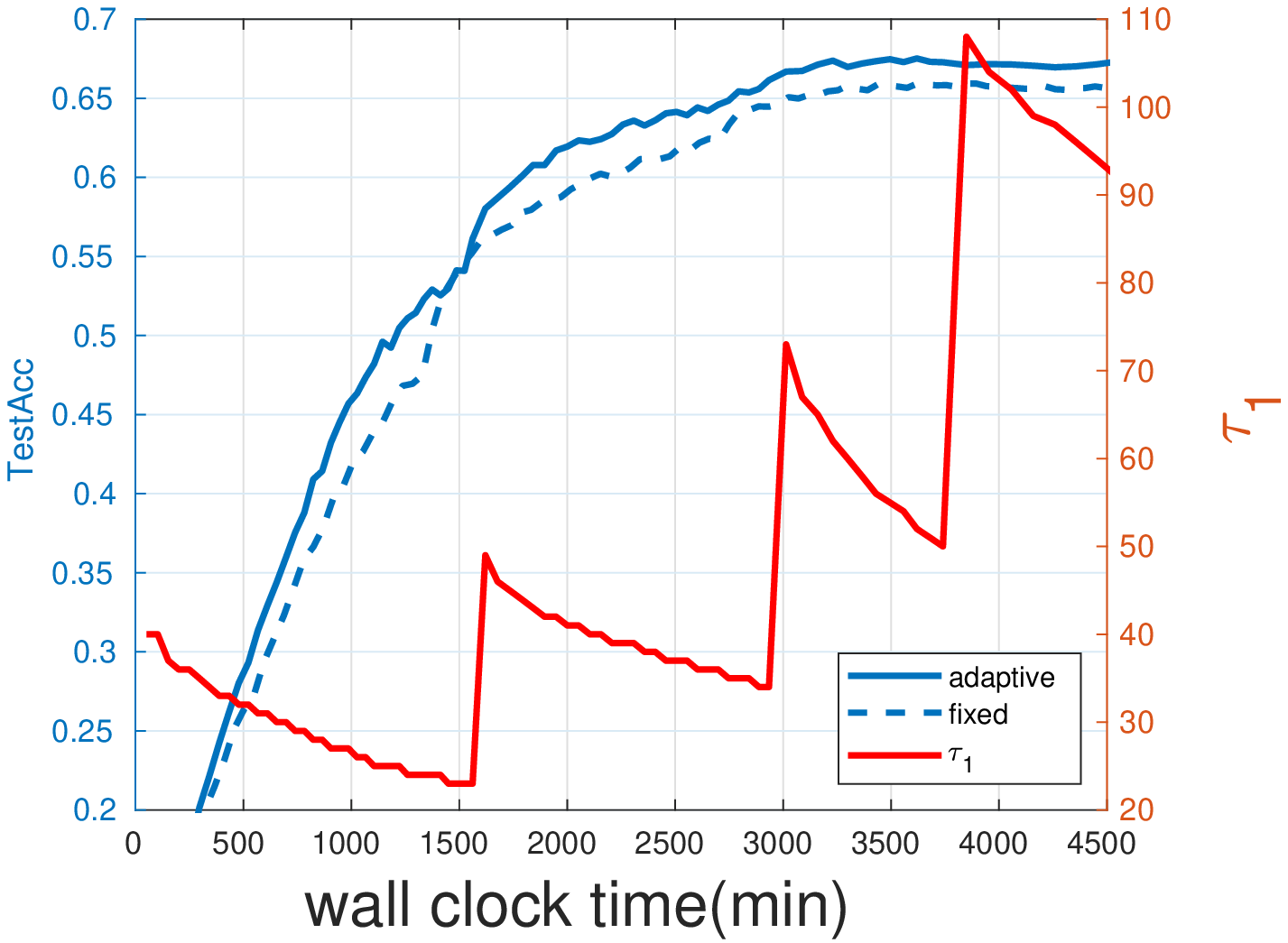}
		\caption{\textit{CIFAR-100}, $\alpha=1$}
		\vspace{20pt}
		\label{figsim4:sub2}
	\end{subfigure}
	\begin{subfigure}{.33\textwidth}
		\captionsetup{width=0.8\textwidth}
		\centering
		\includegraphics[width=.9\linewidth]{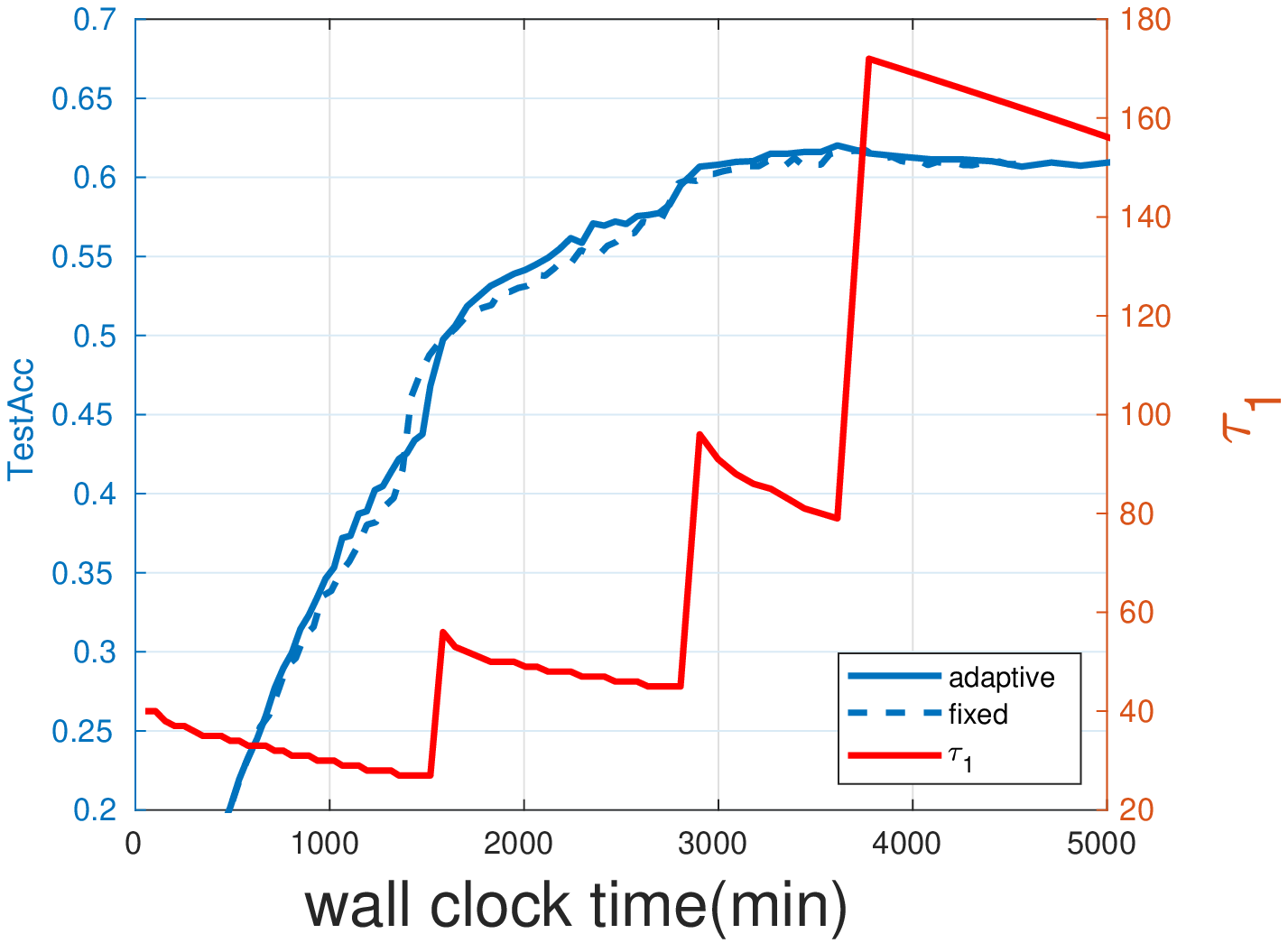}
		\caption{\textit{CIFAR-100}, $ \alpha=0.1$}
		\vspace{20pt}
		\label{figsim4:sub3}
	\end{subfigure}
	\vspace{10pt}
	\caption{{Test accuracy and local aggregation interval $\tau_1$ values against the wall-clock time for adaptive aggregation interval control for different data distributions $\alpha$. $n=20\text{ and } s=4$.}}
	\label{fig:sim3}
\end{figure*}

\section{Numerical Results} \label{Simulations}
In this section, we present sample simulation results for Hier-Local-QSGD to verify the observations from the convergence analysis and illustrate the effectiveness of the proposed adaptive aggregation interval selection algorithm.

\subsection{Settings}
\par
We consider a hierarchical FL system with $n$ clients, $s$ edge servers and a cloud server, assuming that edge server $\ell$ connects with $m^\ell$ clients, each with the same amount of training data. While the design guidelines and the  proposed algorithm is developed for the IID data case, we will also test its effectiveness under the non-IID setting. To simulate different degrees of non-IID data splits, we utilize the Dirichlet distribution $Dir(\alpha)$ as in \cite{NEURIPS2020_18df51b9} with a larger $\alpha$ indicating a more homogeneous data distribution. Particularly in our experiment, $\alpha=100$ represents the IID case, $\alpha=1$ represents the non-IID case, and $\alpha=0.1$ represents the extreme non-IID case. 

For the \textit{CIFAR-10} dataset, we use a CNN with 3 convolutional blocks, which has 5,852,170 parameters and achieves a 90\% testing accuracy in centralized training. For the local computation of the training with \textit{CIFAR-10}, mini-batch SGD is employed with a batch size of 10, an initial learning rate of 0.1, and an exponential learning rate decay of 0.992 for every epoch.  
{For the \text{CIFAR-100}, we use ResNet18, which has 11,220,132 parameters. The local training employs SGD with a batch size of 20. The learning rate is initialized as 0.1 and then set as 0.02, 0.004, and 0.008 at the 60-th, 120-th, and 160-th epoch.}
\par 
For the quantizer, we use the random sparsification operator in Example \ref{example1} and the stochastic rounding in Example \ref{example2}.
The modelling of the computing and communication latency during the training process largely follows \cite{tran2019federated}. We assume homogeneous communication conditions and computing resources for different clients. The clients upload the model through a wireless channel of bandwidth $B$ that is equal to 1Mhz and with a channel gain $h$ that equal to $10^{-8}$. The transmitter power $p$ is fixed at 0.5W, and the noise power $N_0$ is $10^{-10}$ W. For the local computation model, the number of CPU cycles to execute one sample of data is denoted as $c$, which can be measured in advance and is assumed to be 20 cycles/bit in the simulations. The CPU cycle frequency $f$ of the client's device is assumed to be 1 GHz. Thus, the local computation time for one bit of data is $\frac{c}{f}$. We assume the communication latency of the edge-cloud link is 10 times larger than that of the client-edge link. Assume the uploaded model size is $W$ bits, and one local iteration involves $D$ bits of data. In this case, the latency for one full-precision model upload and one local iteration can be calculated with the following equations: $T^{comp} = \frac{cD}{f}, \quad  T^{comm} = \frac{W}{B \log _2(1+\frac{hp}{N_0})}.$

Specifically, the values of the communication and computation latency of the full-precision model mentioned above for \textit{CIFAR-10} with a training batch size of 10 is $T^{comp} = 2s, \text{ and }T^{comm} = 33s.$ The latency for \textit{CIFAR-100} with batch size of 20 is $T^{comp} = 7.2s, \text{ and }T^{comm} = 63.3s.$

\subsection{Verification of the two obtained design guidelines}

\subsubsection{Too much quantization suggests infrequent communication}

In this part, we demonstrate that when the quantization level $q_1$ is above a certain threshold, a larger local update step $\tau_1$ is surprisingly a better choice. 
We simulate a hierarchical system with $n=20$ clients and $s=4$ edge servers. Each edge server serves 5 clients. The aggregation interval product $\tau_1\tau_2$ is fixed to $250$. It can be observed from Eq. \eqref{eq:32} that when the quantization variance parameter $q_1 > n/s - 1 = 4$, then a smaller $\tau_1$ will lead to a faster convergence speed. We adopt the random sparsification operator in Example \ref{example1} as the quantization technique and run experiments with different values of $q_1$. The results are presented in Table. \ref{table2}. 

\par
It can be clearly seen that as the value of $q_1$ increases, the influence of the local update step $\tau_1$ flips. When $q_1=0$, which means no quantization, $\tau_1=5$ leads to the fastest convergence speed. When $q_1=19$, different values of $\tau_1$ have little impact on the convergence.  When $q_1=65.67$, $\tau_1=125$ achieves the best performance, which agrees with the theoretical results. This result is very interesting and promising since infrequent communication and quantization are two important techniques to improve the communication efficiency in FL, but the communication reduction always comes at the price of slower convergence. Our results show both theoretically and experimentally that we can improve the communication efficiency without hurting the convergence speed under certain circumstances. However, it is also noted that the flipping threshold does not match the theoretical result when $q_1=\frac{n}{s}-1=4$. {Simulation results with non-IID data are also presented in Table \ref{table2}. It is observed that the conclusion still holds for the minor non-IID case, i.e., $\alpha=1$. For the extreme non-IID case, when $q_1=65.57$, a small $\tau_1$ may fail the training. The same conclusion is verified when stochastic rounding is adopted as the quantization scheme with IID data and the results are shown in Table \ref{table3}.}
 
\subsubsection{Edge-client Association}


In this part, we show that when using the proposed training algorithm Hier-Local-QSGD, the learning performance is irrelevant to the edge-client association strategy. As revealed in our analysis, this is because the weighted average scheme can balance the variance. In the experiments, we consider a hierarchical system with 20 clients and 2 edge servers. The test accuracy of  three different client-edge association cases, i.e., $(10,10)$,  $(15,5)$ and $(18,2)$, are compared, where $(10,10) $ indicates that 10 clients are associated with the first edge server and the other 10 with the second. It is shown from Fig. \ref{figsim1:sub1} that the test accuracy learning curves of the three association cases coincide with each other using the proposed Hier-Local-QSGD algorithm., which verifies our observation in Section \ref{Convergence Analysis}. Note that the performance curves for the three association strategies diverge when using the uniform average in \cite{wang2020local} at the aggregation step, as shown in Fig. \ref{figsim1:sub2}.  It can be seen that $(10,10)$, in which case the uniform average is equivalent to the weighted average, outperforms the others. {Simulation results with extreme non-IID ($\alpha=0.1$) data split are also presented in Fig. \ref{figsim1:sub3} and Fig. \ref{figsim1:sub4}.} 
Surprisingly, for this particular non-iid case, the edge-client association does not affect the performance. Our analysis cannot provide an explanation for this observation, and we empirically find that this conclusion does not generally hold for non-iid clients. Thus, it is interesting to characterize the conditions under which edge-client association does not have any impact on the training performance.

\subsection{Adaptive Aggregation Interval Control}

We  now evaluate the adaptive aggregation interval control scheme. 
We simulate a hierarchical FL system with $n=20 \text{ and }s=4$. As mentioned in the communication model part, we assume $D_{ec} = 10D_{de}$, and thus we can get the optimal value of $\tau_2 = 7$ using Eq. \eqref{eq:30}. We pick the initial value of $\tau_1$ as 100 for \textit{CIFAR-10} 
{and initial value of $\tau_1$ as 40 for \textit{CIFAR-100}.} The settings for the fixed aggregation interval strategy is $\tau_1=50 \text{ and } \tau_2=5$ for \textit{CIFAR-10} and 
{
$\tau_1=5 \text{ and } \tau_2=50$ for \textit{CIFAR-100}. }
Since we adopt a learning rate decay scheme for the optimization, the update rule for $\tau_1$ in Eq. \eqref{eq:30} will be modified as $\tau_1^j =  \left\lceil \sqrt{\frac{\eta_{t=0}}{\eta_{t=jT_0}}\frac{F(w_{t=jT_0})}{F(w_{t=0})}}\tau_1^0 \right\rceil$.
\par
As shown in Fig. \ref{fig:sim3}, for different data distribution cases, the adaptive aggregation interval selection scheme outperforms the fixed case. The local aggregation interval $\tau_1$ for $\alpha=100$ and $\alpha=1$ decreases during the training process. For $\alpha=0.1$, the training loss will plateau in the latter part of the training but the learning rate is still decaying and the value of $\tau_1$ will increase. 
{For \textit{CIFAR-100}, $\tau_1$ increased when the learnig rate decays at 60-th, 120-th and 160-th epoch.
}

\section{Conclusions}
\par
This paper developed a provably communication-efficient hierarchical FL algorithm and provided a tighter convergence analysis to support system design. The analysis in this paper improves the error term of local update from quadratic to linear, based on which two important design guidelines regarding the local update interval and the edge-client association were provided. Besides, an adaptive algorithm was developed to determine the two values of aggregation intervals. Simulations verified the design guidelines and demonstrated the effectiveness of the adaptive interval selection scheme. One limitation of the current analysis is that it focused on the IID-data, which leads to the conclusion that the edge-client association has no influence on the convergence speed. It would be interesting to extend the analysis of Hier-Local-QSGD to the non-IID case and further investigate the edge-client problem. {In the future work, we also intend to consider the resource allocation problem based on the convergence analysis in realistic wireless FL systems, and considering practical heterogeneous clients.}


%

\appendix
\centerline{\textbf{Proofs of Key Lemmas}}
\section{Proofs of Key Lemmas} \label{Appendix}

\noindent
{\textbf{Lemma \ref{lemma1}}:}
	The proof directly follows from the property of the $L$-smoothness:
	
	\begin{equation}
		f(x) \leq f(y) + \left \langle \nabla f(y), x-y \right \rangle + \frac{L}{2} \left \Vert x-y \right \Vert ^2. \label{eq:13}
	\end{equation}

	For any $L$-smooth function $f$ and variables $x, y$,  then under the $L$-smooth assumption:

	\begin{equation}
		f(x_{k+1}) \leq f(\bar{x}_{k+1}) + \left \langle \nabla f(\bar{x}_{k+1}), x_{k+1}-\bar{x}_{k+1} \right\rangle + \frac{L}{2} \left \Vert x_{k+1}-\bar{x}_{k+1} \right \Vert ^2. \label{eq:14}
	\end{equation}
	\begin{equation}
	f(\bar{x}_{k+1}) \leq f(x_k) + \left \langle \nabla f(x_k, \bar{x}_{k+1}- x_k \right \rangle + \frac{L}{2} \left \Vert \bar{x}_{k+1}- x_k \right \Vert ^2. \label{eq:15}
	\end{equation}

	By taking the expectations of both sides of \eqref{eq:14} and from Eqns. \eqref{eq:10}, \eqref{eq:11} and the unbiased assumption of the random quantizer $Q_2$, we have $\mathbb{E}_{Q_2} [x_{k+1}] = \bar{x}_{k+1}$, so that \eqref{eq:14} becomes: 
	\begin{equation}
		\mathbb{E} f(x_{k+1}) \leq  \mathbb{E} f(\bar{x}_{k+1}) + \frac{L}{2} \mathbb{E} \left \Vert x_{k+1}-\bar{x}_{k+1} \right \Vert ^2  
		\label{eq:16}	
	\end{equation} 

	Similarly, by taking the expectation over \eqref{eq:15} and combining it with \eqref{eq:16}, Lemma \ref{lemma1} is proved.

\vspace{5mm}
\noindent
{\textbf{Lemma \ref{lemma2}}:}
	From Eqn.  \eqref{eq:10}, we have
	\small
	\begin{equation}
		\bar{x}_{k+1} - x_k = - \eta \sum_{\ell \in [s]} \frac{m^\ell}{n} 
		\frac{1}{m^\ell}\sum_{\alpha=0}^{\tau_2-1}\sum_{j\in \mathcal{C}^{\ell}} Q_1^{(\alpha)}\left [ \sum_{\beta=0}^{\tau_1-1} \tilde{\nabla}f_j(x^j_{k,\alpha,\beta}) \right ]  \label{eq:17}
	\end{equation}
	\normalsize
	Then by taking the expectation and changing the subscript from $(\alpha, \beta)$ to $(t_2, t_1)$, we obtain: 
	\small
	\begin{align}
		&\mathbb{E} \langle \nabla f(x_k), \bar{x}_{k+1} - x_k \rangle  \notag \\
		&= - \mathbb{E} \langle \nabla f(x_k), \eta \sum_{\ell \in [s]} \frac{m^\ell}{n} 
		\frac{1}{m^\ell}\sum_{\alpha=0}^{\tau_2-1} \sum_{j\in \mathcal{C}^{\ell}} Q_1^{(\alpha)}\left [ \sum_{\beta=0}^{\tau_1-1} \tilde{\nabla}f_j(x^j_{k,\alpha,\beta}) \right ] \rangle  \\
		& = - \frac{\eta}{n} \sum_{j\in [n]} 
		\sum_{t_2=0}^{\tau_2-1}  \sum_{t_1=0}^{\tau_1-1}  \mathbb{E}_{
			 \left\{ Q_1 ^{(\alpha)},\{ \xi_{k,\alpha,\beta} \}_{\beta = 0}^{\tau_1-1} \right\}_{\alpha = 0}^{t_2-1}, \atop
			 \{ \xi_{k,t_2,\beta} \}_{\beta = 0}^{t_1-1} }
		  {\langle \nabla f(x_k), {\nabla}f(x^j_{k,t_2,t_1}) \rangle} \label{eq:18}
	\end{align}
	\normalsize
	Here for each tuple of $(k,t_2, t_1)$, $\mathbb{E}$ means taking the expectation of the randomness generated from the SGD and the quantization occurred before step $(k,t_2, t_1)$.
	For the simplicity of notation, we omit the subscript of the expectation operation and use $\mathbb{E}$. Using the identity $2\langle \boldsymbol{a}, \boldsymbol{b} \rangle = \Vert \boldsymbol{a} \Vert ^2 + \Vert \boldsymbol{b} \Vert ^2 - \Vert \boldsymbol{a} -\boldsymbol{b} \Vert ^2$, we have
	\small
	\begin{align}
		- \mathbb{E} \langle \nabla f(x_k), {\nabla}f(x^j_{k,t_2,t_1}) \rangle 
		 =- \frac{1}{2} &\mathbb{E} \left \Vert \nabla f(x_k) \right \Vert ^2 -\frac{1}{2} \mathbb{E} \left \Vert {\nabla}f(x^j_{k,t_2,t_1}) \right \Vert ^2  \notag \\
		  & +\frac{1}{2} \mathbb{E} \left \Vert { \nabla f(x_k) - \nabla}f(x^j_{k,t_2,t_1}) \right \Vert ^2 \label{eq:19}
	\end{align}
	\normalsize
	Now, we will bound the third term on the RHS of \eqref{eq:19}
	\small
	
		\begin{align}
			&\mathbb{E} \left \Vert  \nabla f(x_k) - {\nabla}f(x^j_{k,t_2,t_1}) \right \Vert ^2  \notag \\
			=& L^2 \eta ^2 \mathbb{E} \left \Vert  \sum_{\beta=0}^{t_1-1} \tilde{\nabla}f_i(x^i_{k,t_2,\beta}) + \sum_{\alpha=0}^{t_2-1}\sum_{j\in \mathcal{C}^{\ell _{i}}} \frac{1}{m^{\ell_i}} Q_1^{(\alpha)}\left [ \sum_{\beta=0}^{\tau_1-1} \tilde{\nabla}f_j(x^j_{k,\alpha,\beta})\right]  \right \Vert ^2  
			\label{eq:22}
		\end{align}
	
	\normalsize
	Then by using $\mathbb{E} \Vert x \Vert ^2 =  \left \Vert \mathbb{E}x \right\Vert ^2 + Var (x)^2$, we get
	\small
	\begin{align}
	&\mathbb{E} \left \Vert  \sum_{\beta=0}^{t_1-1} \tilde{\nabla}f_i(x^i_{k,t_2,\beta}) + \sum_{\alpha=0}^{t_2-1}\sum_{j\in \mathcal{C}^{\ell _{i}}} \frac{1}{m^{\ell_i}} Q_1^{(\alpha)}\left [ \sum_{\beta=0}^{\tau_1-1} \tilde{\nabla}f_j(x^j_{k,\alpha,\beta})\right]  \right \Vert ^2  \\
	= & \underbrace{\mathbb{E} \left \Vert  \sum_{\beta=0}^{t_1-1} {\nabla}f(x^i_{k,t_2,\beta}) + \sum_{\alpha=0}^{t_2-1}\sum_{j\in \mathcal{C}^{\ell _{i}}} \frac{1}{m^{\ell_i}} \left [ \sum_{\beta=0}^{\tau_1-1}  {\nabla}f(x^j_{k,\alpha,\beta})\right]  \right \Vert ^2}_{A} \notag  \\
	& + \underbrace{\mathbb{E} \left \Vert  \sum_{\beta=0}^{t_1-1} \left[\tilde{\nabla}f_i(x^i_{k,t_2,\beta})- {\nabla}f(x^i_{k,t_2,\beta})\right] \atop + \sum_{\alpha=0}^{t_2-1}\sum_{j\in \mathcal{C}^{\ell _{i}}} \frac{1}{m^{\ell_i}}  \left\{ Q_1^{(\alpha)}\left [ \sum_{\beta=0}^{\tau_1-1} \tilde{\nabla}f_j(x^j_{k,\alpha,\beta})\right] \atop
	- \sum_{\beta=0}^{\tau_1-1}  {\nabla}f(x^j_{k,\alpha,\beta})  \right\}  \right \Vert ^2}_{B}  \\
	= & \underbrace{\mathbb{E} \left \Vert  \sum_{\beta=0}^{t_1-1} {\nabla}f(x^i_{k,t_2,\beta}) + \sum_{\alpha=0}^{t_2-1}\sum_{j\in \mathcal{C}^{\ell _{i}}} \frac{1}{m^{\ell_i}} \left [ \sum_{\beta=0}^{\tau_1-1}  {\nabla}f(x^j_{k,\alpha,\beta})\right]  \right \Vert ^2}_A \notag \\
	 &+ \underbrace{\sum_{\beta=0}^{t_1-1} \mathbb{E} \left \Vert   \left[\tilde{\nabla}f_i(x^i_{k,t_2,\beta})- {\nabla}f(x^i_{k,t_2,\beta})\right] \right \Vert ^2}_{B_1} \notag \\
	 & + \underbrace{\sum_{\alpha=0}^{t_2-1} \mathbb{E} \left \Vert \sum_{j\in \mathcal{C}^{\ell _{i}}} \frac{1}{m^{\ell_i}}  \left\{ Q_1^{(\alpha)}\left [ \sum_{\beta=0}^{\tau_1-1} \tilde{\nabla}f_j(x^j_{k,\alpha,\beta})\right]- \sum_{\beta=0}^{\tau_1-1}  {\nabla}f(x^j_{k,\alpha,\beta})  \right\}  \right \Vert ^2}_{B_2} \label{eq:20}
	\end{align}
	\normalsize
The expectation did not disappear on term A since the sequence $\{x_{k,\alpha, \beta}^j\}$ itself is also random. The inner product vanished when $B$ is expanded, and the conditional independence is used, the detailed proof process are shown on the top of next page.
	\begin{figure*}
	Showing $B=0$:
	\small
	\begin{enumerate}
		\item $\mathbb{E} \left \langle \nabla f(x_{k,t_2,s}^j)-\tilde{\nabla} f_j(x_{k,t_2,s}^j), \nabla f(x_{k,t_2,s}^j)-\tilde{\nabla} f_j(x_{k,t_2,s}^j) \right \rangle$,$ \forall s < t$
			\begin{flalign*}
				& \mathbb{E}_{x_{k,t_2,s}^j,\xi_{k,t_2,s}^j,x_{k,t_2,t}^j,\xi_{k,t_2,t}^j} \left \langle \nabla f(x_{k,t_2,s}^j)-\tilde{\nabla} f_j(x_{k,t_2,s}^j), \nabla f(x_{k,t_2,s}^j)-\tilde{\nabla} f_j(x_{k,t_2,s}^j) \right \rangle &\\
				= & \mathbb{E}_{x_{k,t_2,s}^j,\xi_{k,t_2,s}^j,x_{k,t_2,t}^j} \left \langle \nabla f(x_{k,t_2,s}^j)-\tilde{\nabla} f_j(x_{k,t_2,s}^j), \mathbb{E}_{\xi_{k,t_2,t}^j} \left[ \nabla f(x_{k,t_2,s}^j)-\tilde{\nabla} f_j(x_{k,t_2,s}^j) \right] \right \rangle = 0
			\end{flalign*}
		
		\small
		\item $\mathbb{E} \left \langle \nabla f(x_{k,t_2,s}^j)-\tilde{\nabla} f_j(x_{k,t_2,s}^j), \sum_{j\in \mathcal{C}^{\ell _{i}}} \left\{ Q_1^{(\alpha)}\left [ \sum_{\beta=0}^{\tau_1-1} \tilde{\nabla}f_j(x^j_{k,\alpha,\beta})\right]- \sum_{\beta=0}^{\tau_1-1}  {\nabla}f(x^j_{k,\alpha,\beta})  \right\} \right \rangle$: 
		\small
		\begin{flalign*}
			& \mathbb{E}_{x_{k,t_2,s}^j,\xi_{k,t_2,s}^j,\{x_{k,\alpha,\beta}^j\}_{\beta=0}^{\tau_1},\{\xi_{k,\alpha,\beta}^j\}_{\beta=0}^{\tau_1} } \left \langle \nabla f(x_{k,t_2,s}^j)-\tilde{\nabla} f_j(x_{k,t_2,s}^j), \sum_{j\in \mathcal{C}^{\ell _{i}}} \left\{ Q_1^{(\alpha)}\left [ \sum_{\beta=0}^{\tau_1-1} \tilde{\nabla}f_j(x^j_{k,\alpha,\beta})\right]- \sum_{\beta=0}^{\tau_1-1}  {\nabla}f(x^j_{k,\alpha,\beta})  \right\} \right \rangle &\\
			=&  \mathbb{E}_{x_{k,t_2,s}^j,\{x_{k,\alpha,\beta}^j\}_{\beta=0}^{\tau_1},\{\xi_{k,\alpha,\beta}^j\}_{\beta=0}^{\tau_1}} \left \langle \mathbb{E}_{\xi_{k,t_2,s}^j}  [\nabla f(x_{k,t_2,s}^j)-\tilde{\nabla} f_j(x_{k,t_2,s}^j)], \sum_{j\in \mathcal{C}^{\ell _{i}}} \left\{ Q_1^{(\alpha)}\left [ \sum_{\beta=0}^{\tau_1-1} \tilde{\nabla}f_j(x^j_{k,\alpha,\beta})\right]- \sum_{\beta=0}^{\tau_1-1}  {\nabla}f(x^j_{k,\alpha,\beta})  \right\} \right \rangle &\\
			=& \mathbb{E}_{x_{k,t_2,s}^j,\{x_{k,\alpha,\beta}^j\}_{\beta=0}^{\tau_1},\{\xi_{k,\alpha,\beta}^j\}_{\beta=0}^{\tau_1}} \left \langle 0, \sum_{j\in \mathcal{C}^{\ell _{i}}} \left\{ Q_1^{(\alpha)}\left [ \sum_{\beta=0}^{\tau_1-1} \tilde{\nabla}f_j(x^j_{k,\alpha,\beta})\right]- \sum_{\beta=0}^{\tau_1-1}  {\nabla}f(x^j_{k,\alpha,\beta})  \right\} \right \rangle =0
		\end{flalign*}
		
		\item \small$\forall s< t, \mathbb{E} \left \langle \sum_{j\in \mathcal{C}^{\ell _{i}}} \left\{ Q_1^{(s)}\left [ \sum_{\beta=0}^{\tau_1-1} \tilde{\nabla}f_j(x^j_{k,s,\beta})\right]- \sum_{\beta=0}^{\tau_1-1}  {\nabla}f(x^j_{k,s,\beta})  \right\},
		\sum_{j\in \mathcal{C}^{\ell _{i}}} \left\{ Q_1^{(t)}\left [ \sum_{\beta=0}^{\tau_1-1} \tilde{\nabla}f_j(x^j_{k,t,\beta})\right]- \sum_{\beta=0}^{\tau_1-1}  {\nabla}f(x^j_{k,t,\beta})  \right\} \right \rangle$.
		\normalsize 
		\vspace{2.5mm}
		The expansion of this form of the inner product is very complex, but it can still be proved to be $0$ in the expectation following the same idea of using the conditional independence as in the above mentioned two cases.
	\end{enumerate}
	\vspace{2.5mm}
	\hrule
\end{figure*}

	\normalsize
	Now we continue to bound the three terms $A, B_1,\text{ and } B_2$ in \eqref{eq:20} as
	\small
	\begin{align}
	A & \leq t_1 \sum_{\beta=0}^{t_1 - 1} \mathbb{E} \left\Vert \nabla f(x_{k,t_2, \beta}) \right\Vert^2  \notag\\
	&+ \frac{1}{m^{\ell_{i}}} t_2\tau_1\sum_{i \in \mathcal{C}^\ell_i} \sum_{\alpha=0}^{t_2-1}\sum_{\beta = 0}^{\tau_1-1} \mathbb{E}\left\Vert \nabla f(x_{k,\alpha, \beta}^j) \right\Vert^2
	\end{align}
	\begin{align}
		B_1 = \sum_{\beta=0}^{t_1-1} \mathbb{E} \left \Vert   \left[\tilde{\nabla}f_i(x^i_{k,t_2,\beta})- {\nabla}f(x^i_{k,t_2,\beta})\right] \right \Vert ^2 \leq t_1\sigma^2 \label{eq:21}
	\end{align}
	\normalsize
	Note that \eqref{eq:21} directly follows from Assumption \ref{assumption2}. Next
	\small
	\begin{align}
		&B_2 \notag \\
		= &\sum_{\alpha=0}^{t_2-1} \mathbb{E} \left \Vert \sum_{j\in \mathcal{C}^{\ell _{i}}} \frac{1}{m^{\ell_i}}  \left\{ Q_1^{(\alpha)}\left [ \sum_{\beta=0}^{\tau_1-1} \tilde{\nabla}f_j(x^j_{k,\alpha,\beta})\right]- \sum_{\beta=0}^{\tau_1-1}  {\nabla}f(x^j_{k,\alpha,\beta})  \right\}  \right \Vert ^2 \label{eq:33} \\
		 = &\sum_{\alpha=0}^{t_2-1} \sum_{j\in \mathcal{C}^{\ell _{i}}} \frac{\mathbb{E} \left \Vert \left\{ Q_1^{(\alpha)}\left [ \sum_{\beta=0}^{\tau_1-1} \tilde{\nabla}f_j(x^j_{k,\alpha,\beta})\right]- \sum_{\beta=0}^{\tau_1-1}  {\nabla}f(x^j_{k,\alpha,\beta})  \right\}  \right \Vert ^2}{\left(m^{\ell_i}\right)^2}  \\
		 =& \sum_{\alpha=0}^{t_2-1} \sum_{j\in \mathcal{C}^{\ell _{i}}} \frac{1}{\left(m^{\ell_i}\right)^2} \mathbb{E} \left\{ \left \Vert \left\{ Q_1^{(\alpha)}\left [ \sum_{\beta=0}^{\tau_1-1} \tilde{\nabla}f_j(x^j_{k,\alpha,\beta})\right] \atop - \sum_{\beta=0}^{\tau_1-1}  \tilde{\nabla}f_j(x^j_{k,\alpha,\beta})  \right\}  \right \Vert ^2 \atop +  \left \Vert \left\{ \sum_{\beta=0}^{\tau_1-1} \tilde{\nabla}f_j(x^j_{k,\alpha,\beta}) \atop - \sum_{\beta=0}^{\tau_1-1}  {\nabla}f(x^j_{k,\alpha,\beta})  \right\}  \right \Vert ^2 \right \} \\
		 \leq &\sum_{\alpha=0}^{t_2-1} \sum_{j\in \mathcal{C}^{\ell _{i}}} \frac{1}{\left(m^{\ell_i}\right)^2} 
		\left \{
				q_1 \mathbb{E} \left\Vert \sum_{\beta=0}^{\tau_1-1} \tilde{\nabla}f_j(x^j_{k,\alpha,\beta}) \right\Vert^2 + \tau_1 \sigma^2
		 \right \} \\
		 \leq& \sum_{\alpha=0}^{t_2-1} \sum_{j\in \mathcal{C}^{\ell _{i}}} \frac{1}{\left(m^{\ell_i}\right)^2} 
		\left \{
		q_1 \mathbb{E} \left\Vert \sum_{\beta=0}^{\tau_1-1} {\nabla}f(x^j_{k,\alpha,\beta}) \right\Vert^2 
		+ q_1\tau_1\sigma^2 \right \} \notag \\
		& + \frac{1}{m^{\ell_i}}t_2\tau_1\sigma^2 \\
		 \leq & \sum_{\alpha=0}^{t_2-1} \sum_{j\in \mathcal{C}^{\ell _{i}}} \frac{1}{\left(m^{\ell_i}\right)^2} 
		\left \{
		q_1\tau_1 \sum_{\beta=0}^{\tau_1-1} \mathbb{E} \left\Vert  {\nabla}f(x^j_{k,\alpha,\beta}) \right\Vert^2
		 \right \} + \frac{1+q_1}{m^{\ell_i}}t_2\tau_1\sigma^2
	\end{align}
	
	\normalsize
	The inner-product vanished when expanding $B_2$ using the same conditional independence mentioned above when expanding term $B$.
	\par
	Now we can write the bound on \eqref{eq:22} as
	\small
	\begin{align}
		& \mathbb{E}\left \Vert  \nabla f(x_k) - {\nabla}f(x^j_{k,t_2,t_1}) \right \Vert ^2 
		 \leq L^2\eta^2 \left(A + B_1+ B_2\right)\\
		 \leq& L^2\eta^2 \left\{ \underbrace{t_1 \sum_{\beta=0}^{t_1 - 1} \mathbb{E} \left\Vert \nabla f(x_{k,t_2, \beta}) \right\Vert^2 \atop 
		 + \frac{1}{m^{\ell_{i}}} \left(t_2\tau_1 +\frac{q_1\tau_1}{m^{\ell_{i}}} \right)\sum_{i \in \mathcal{C}^\ell_i} \sum_{\alpha=0}^{t_2-1}\sum_{\beta = 0}^{\tau_1-1} \mathbb{E}\left\Vert \nabla f(x_{k,\alpha, \beta}^j) \right\Vert^2 }_{C_1} \right\} \atop
		  + L^2\eta^2 \left( \underbrace{t_1\sigma^2 + \frac{1+q_1}{m^{\ell_i}}t_2\tau_1\sigma^2 }_{C_2}\right) 
	\end{align}
	
	\normalsize
	By summing $C_1$ over $i, t_2, \text{ and }t_1$, we then get
	\vspace{6mm}
	\small
	\begin{align}
		&\frac{1}{n} \sum_{i=1}^n \sum_{t_2=0}^{\tau_2-1}\sum_{t_1=0}^{\tau_1-1} C_1 \\
		= & \frac{1}{n}\sum_{i=1}^n \sum_{t_2=0}^{\tau_2-1}\sum_{t_1=0}^{\tau_1-1} \left( t_1 \sum_{\beta=0}^{t_1 - 1} \mathbb{E} \left\Vert \nabla f(x_{k,t_2, \beta}) \right\Vert^2 + \atop
		 \frac{t_2\tau_1 +\frac{q_1\tau_1}{m^{\ell_{i}}}}{m^{\ell_{i}}} \sum_{j \in \mathcal{C}^\ell_i} \sum_{\alpha=0}^{t_2-1}\sum_{\beta = 0}^{\tau_1-1} \mathbb{E}\left\Vert \nabla f(x_{k,\alpha, \beta}^j) \right\Vert^2 \right) \\
		\leq & \frac{(\tau_1-1)\tau_1}{2} \frac{1}{n}\sum_{i=1}^n \sum_{\alpha=0}^{\tau_2-1}\sum_{\beta=0}^{\tau_1-1} \mathbb{E} \left\Vert \nabla f(x_{k,\alpha, \beta}) \right\Vert^2  \notag \\
		&+ \frac{1}{n}\sum_{i=1}^n \sum_{t_2=0}^{\tau_2-1}\sum_{t_1=0}^{\tau_1-1} \left(t_2\tau_1 +q_1\tau_1 \right) \frac{1}{m^{\ell_{i}}} \sum_{j \in \mathcal{C}^\ell_i} \sum_{\alpha=0}^{\tau_2-1}\sum_{\beta = 0}^{\tau_1-1} \mathbb{E}\left\Vert \nabla f(x_{k,\alpha, \beta}^j) \right\Vert^2 \\
		= & \frac{(\tau_1-1)\tau_1}{2} \frac{1}{n}\sum_{i=1}^n \sum_{\alpha=0}^{\tau_2-1}\sum_{\beta=0}^{\tau_1-1} \mathbb{E} \left\Vert \nabla f(x_{k,\alpha, \beta}) \right\Vert^2 \notag \\
		+& \frac{1}{n}\sum_{i=1}^n \frac{1}{m^{\ell_{i}}} \sum_{j \in \mathcal{C}^\ell_i} \sum_{\alpha=0}^{\tau_2-1}\sum_{\beta = 0}^{\tau_1-1} \mathbb{E}\left\Vert \nabla f(x_{k,\alpha, \beta}^j) \right\Vert^2  \sum_{t_2=0}^{\tau_2-1}\sum_{t_1=0}^{\tau_1-1} \left(t_2\tau_1 +q_1\tau_1 \right) \\
		= & \frac{(\tau_1-1)\tau_1}{2} \frac{1}{n}\sum_{i=1}^n \sum_{\alpha=0}^{\tau_2-1}\sum_{\beta=0}^{\tau_1-1} \mathbb{E} \left\Vert \nabla f(x_{k,\alpha, \beta}) \right\Vert^2 \notag \\
		+& \frac{1}{n}\sum_{i=1}^n \sum_{\alpha=0}^{\tau_2-1}\sum_{\beta = 0}^{\tau_1-1} \mathbb{E}\left\Vert \nabla f(x_{k,\alpha, \beta}^i) \right\Vert^2  \tau_1\tau_2\left(\frac{\tau_1(\tau_2-1)}{2}+q_1\tau_2 \right) \\
		= & \left[ \frac{(\tau_1-1)\tau_1}{2} + \atop \tau_1\tau_2\left(\frac{\tau_1(\tau_2-1)}{2}+q_1\tau_2 \right) \right]  \frac{1}{n}\sum_{i=1}^n \sum_{\alpha=0}^{\tau_2-1}\sum_{\beta=0}^{\tau_1-1} \mathbb{E} \left\Vert \nabla f(x_{k,\alpha, \beta}) \right\Vert^2
	\end{align}
	\normalsize
	By summing $C_2$ over $i, t_2,\text{ and } t_1$, we get
	\small
	\begin{align}
		&\frac{1}{n} \sum_{i=1}^n \sum_{t_2=0}^{\tau_2-1}\sum_{t_1=0}^{\tau_1-1} C_2 \\
		= & \frac{1}{n}\sum_{i=1}^n \sum_{t_2=0}^{\tau_2-1}\sum_{t_1=0}^{\tau_1-1} \left( t_1\sigma^2 + \frac{1+q_1}{m^{\ell_i}}t_2\tau_1\sigma^2 \right) \\
		= & \frac{\tau_1 \tau_2}{2} \left( (\tau_1-1) + \frac{s}{n}(1+q_1) \tau_1 (\tau_2-1) \right)\sigma^2
	\end{align} 
	\normalsize
	Finally, we derive the upper bound in the Lemma \ref{lemma2}:
	\small
	\begin{align}
	&\mathbb{E} \langle \nabla f(x_k), \bar{x}_{k+1} - x_k \rangle\\
	\leq& - \frac{\eta}{2} \sum_{j\in [n]} \frac{1}{n} 
	\sum_{t_2=0}^{\tau_2-1}  \sum_{t_1=0}^{\tau_1-1} \mathbb{E} \left \Vert \nabla f(x_k) \right \Vert^2 \notag \\
	&- \frac{\eta}{2} \sum_{j\in [n]} \frac{1}{n} 
	\sum_{t_2=0}^{\tau_2-1}  \sum_{t_1=0}^{\tau_1-1} \mathbb{E} \left \Vert \nabla f(x_{k,t_2,t_1}^i) \right \Vert^2 \notag  \notag \\
	&+ \frac{L^2\eta^3}{2} \sum_{j\in [n]} \frac{1}{n} 
	\sum_{t_2=0}^{\tau_2-1}  \sum_{t_1=0}^{\tau_1-1} \left( C_1 + C_2 \right) \\
	\leq & - \frac{\eta}{2} \sum_{j\in [n]} \frac{1}{n} 
	\sum_{t_2=0}^{\tau_2-1}  \sum_{t_1=0}^{\tau_1-1} \mathbb{E} \left \Vert \nabla f(x_k) \right \Vert^2 \notag \\
	 &- \frac{\eta}{2} \sum_{j\in [n]} \frac{1}{n} 
	\sum_{t_2=0}^{\tau_2-1}  \sum_{t_1=0}^{\tau_1-1} \mathbb{E} \left \Vert \nabla f(x_{k,t_2,t_1}^i) \right \Vert^2 \notag \\
	&+ \frac{L^2\eta^3}{2n} \left\{ \left[ \frac{(\tau_1-1)\tau_1}{2} +  \atop \tau_1\tau_2\left(\frac{\tau_1(\tau_2-1)}{2}+q_1\tau_2 \right) \right]  \sum_{i=1}^n \sum_{\alpha=0}^{\tau_2-1}\sum_{\beta=0}^{\tau_1-1} \mathbb{E} \left\Vert \nabla f(x_{k,\alpha, \beta}^i) \right\Vert^2 \right \} \notag \\
	&+  \frac{L^2\eta^3}{2} \frac{\tau_1 \tau_2}{2} \left[ (\tau_1-1) + \frac{s}{n}(1+q_1) \tau_1 (\tau_2-1) \right] \sigma^2\\
	\leq & - \frac{\eta}{2} \tau_1\tau_2 \mathbb{E} \left \Vert \nabla f(x_k) \right \Vert^2 \notag \\
	&-\frac{\eta}{2n} \left\{ 1-L^2\eta^2\left[\frac{\tau_1(\tau_1-1)}{2} + \atop
	\tau_1\tau_2\left(\frac{\tau_2(\tau_2-1)}{2}+ \atop q_1\tau_2 \right) \right] \right\} \sum_{i=1}^n \sum_{\alpha=0}^{\tau_2-1}\sum_{\beta=0}^{\tau_1-1} \mathbb{E} \left\Vert \nabla f(x_{k,\alpha, \beta}^i) \right\Vert^2 \notag \\
	&+ \frac{L^2\eta^3}{4}\tau_1 \tau_2 \left[ (\tau_1-1) + \frac{s}{n}(1+q_1) \tau_1 (\tau_2-1) \right] \sigma^2.
	\end{align}
	\normalsize

\noindent
 {\textbf{Lemma \ref{lemma3}}:}
	From Eqn. \eqref{eq:10}, using the property: $\mathbb{E} \Vert x \Vert ^2 =  \left \Vert \mathbb{E}x \right\Vert ^2 + Var (x)^2$, we obtain
	\vspace{6mm}
	\small
	\begin{align}
		&\mathbb{E} \left \Vert \bar{x}_{k+1} - x_k \right \Vert ^2 \\
		=& \eta^2 \mathbb{E}\left\Vert \frac{1}{n}\sum_{i\in [n]}\sum_{t_2=0}^{\tau_2-1} Q_1^{(t_2)} \left[ \sum_{t_1=0}^{\tau_1-1} \tilde{ \nabla} f_i(x_{k,t_2,t_1}^i) \right] \right\Vert ^2 \\
		= & \eta^2 \mathbb{E}\left\Vert \frac{1}{n}\sum_{i\in [n]}\sum_{t_2=0}^{\tau_2-1}\sum_{t_1=0}^{\tau_1-1} {\nabla} f(x_{k,t_2,t_1}^i) \right\Vert ^2 \notag \\
		&+  \frac{\eta^2}{n^2}\sum_{i\in [n]}\sum_{t_2=0}^{\tau_2-1}\mathbb{E}\left\Vert Q_1^{(t_2)} \left[ \sum_{t_1=0}^{\tau_1-1} \tilde{ \nabla} f_i(x_{k,t_2,t_1}^i) \right]  - \sum_{t_1=0}^{\tau_1-1} {\nabla} f(x_{k,t_2,t_1}^i) \right\Vert ^2 \\
		\leq & \eta^2 \tau_1\tau_2 \frac{1}{n}\sum_{i\in [n]}\sum_{t_2=0}^{\tau_2-1}\sum_{t_1=0}^{\tau_1-1} \mathbb{E}\left\Vert {\nabla} f(x_{k,t_2,t_1}^i) \right\Vert ^2 \notag \\
		&+ \eta^2 \frac{1}{n^2}\sum_{i\in [n]}\sum_{t_2=0}^{\tau_2-1} \left\{ q_1\mathbb{E} \left\Vert \sum_{t_1=0}^{\tau_1}\nabla f(x_{k,t_2,t_1}^i) \right\Vert ^2 + (1+q_1)\tau_1\sigma^2\right\} \\
		= & \eta^2 \left( \tau_1\tau_2 +\frac{q_1\tau_1}{n} \right)\frac{1}{n}\sum_{i=1}^n \sum_{\alpha=0}^{\tau_2-1}\sum_{\beta=0}^{\tau_1-1} \mathbb{E} \left\Vert \nabla f(x_{k,\alpha, \beta}^i) \right\Vert^2 \notag \\
		&+ \eta^2\frac{1}{n}(1+q_1)\tau_1\tau_2\sigma^2.
	\end{align}
	\normalsize

\noindent
{\textbf{Lemma \ref{lemma4}}:}
	From Eqn. \eqref{eq:10}, \eqref{eq:11}, we know that 
	\small
	\begin{align}
		&\frac{1}{\eta^2} E \left \Vert x_{k+1} -\bar{x}_{k+1} \right \Vert^2 = \notag \\
		\mathbb{E}& \left\Vert  \sum_{\ell \in [s]} \frac{m^\ell}{n} \left\{ Q_2\left[
		\frac{1}{m^\ell}\sum_{\alpha=0}^{\tau_2-1}\sum_{j\in \mathcal{C}^{\ell}} Q_1^{(\alpha)}\left( \sum_{\beta=0}^{\tau_1-1} \tilde{\nabla}f_j(x^j_{k,\alpha,\beta}) \right) \right] \atop -  \left[ \frac{1}{m^\ell}\sum_{\alpha=0}^{\tau_2-1}\sum_{j\in \mathcal{C}^{\ell}} Q_1^{(\alpha)}\left( \sum_{\beta=0}^{\tau_1-1} \tilde{\nabla}f_j(x^j_{k,\alpha,\beta}) \right) \right] \right\}  \right\Vert ^2 \\
	= &  \sum_{\ell \in [s]}\left(\frac{m^\ell}{n}\right)^2 q_2 \mathbb{E} \left\Vert \left[ \frac{1}{m^\ell}\sum_{\alpha=0}^{\tau_2-1}\sum_{j\in \mathcal{C}^{\ell}} Q_1^{(\alpha)}\left(   \sum_{\beta=0}^{\tau_1-1} \tilde{\nabla}f_j(x^j_{k,\alpha,\beta}) \right) \right] \right\Vert ^2 \\
	= & \sum_{\ell \in [s]}\left(\frac{1}{n}\right)^2 q_2 \mathbb{E} \left\Vert \left[ \sum_{\alpha=0}^{\tau_2-1}\sum_{j\in \mathcal{C}^{\ell}} Q_1^{(\alpha)}\left(\sum_{\beta=0}^{\tau_1-1} \tilde{\nabla}f_j(x^j_{k,\alpha,\beta}) \right) \right] \right\Vert ^2 \label{eq:23}
	\end{align}
	\normalsize
	By using $\mathbb{E} \Vert x \Vert ^2 =  \left \Vert \mathbb{E}x \right\Vert ^2 + Var (x)^2$, and following a similar approach as the one in the proof of Lemma \ref{lemma3}, we conclude that
	\small
	\begin{align}
		&\frac{1}{\eta^2}E \left \Vert x_{k+1} -\bar{x}_{k+1} \right \Vert^2 \notag \\
		\leq &  \sum_{\ell \in [s]}\left(\frac{1}{n}\right)^2 q_2  \left[ \left( \tau_1\tau_2 m^\ell+ \atop\tau_1 q_1 \right)\sum_{j \in \mathcal{C}^\ell}\sum_{t_2=0}^{\tau_2-1}\sum_{t_1=0}^{\tau_1-1}\mathbb{E} \left\Vert \nabla f(x_{k,\alpha, \beta}^i) \right\Vert^2  \atop + m^\ell \tau_1\tau_2(1+q_1)\sigma^2 \right]   \\
		\leq & \sum_{\ell \in [s]}\left(\frac{1}{n}\right)^2 q_2  \left[\left( \tau_1\tau_2 n +\atop \tau_1 q \right)\sum_{j \in \mathcal{C}^\ell}\sum_{t_2=0}^{\tau_2-1}\sum_{t_1=0}^{\tau_1-1}\mathbb{E} \left\Vert \nabla f(x_{k,\alpha, \beta}^i) \right\Vert^2  \atop+ m^\ell \tau_1\tau_2(1+q_1)\sigma^2 \right] \\
		\leq & 2 q_2 \left( \tau_1\tau_2 +\frac{q_1\tau_1}{n} \right)\frac{1}{n}\sum_{i=1}^n \sum_{\alpha=0}^{\tau_2-1}\sum_{\beta=0}^{\tau_1-1} \mathbb{E} \left\Vert \nabla f(x_{k,\alpha, \beta}^i) \right\Vert^2 \notag \\
		& + \frac{1}{n}(1+q_1)q_2\tau_1\tau_2\sigma^2
	\end{align}
	\normalsize

\ifCLASSOPTIONcaptionsoff
  \newpage
\fi



%
%
%
\section*{Acknowledgments}
The authors would like to thank anonymous reviewers and the associate editor for their constructive comments.

\bibliographystyle{ieeetran}
\bibliography{ref.bib}

\begin{IEEEbiography}[{\includegraphics[width=1in,height=1.25in,clip,keepaspectratio]{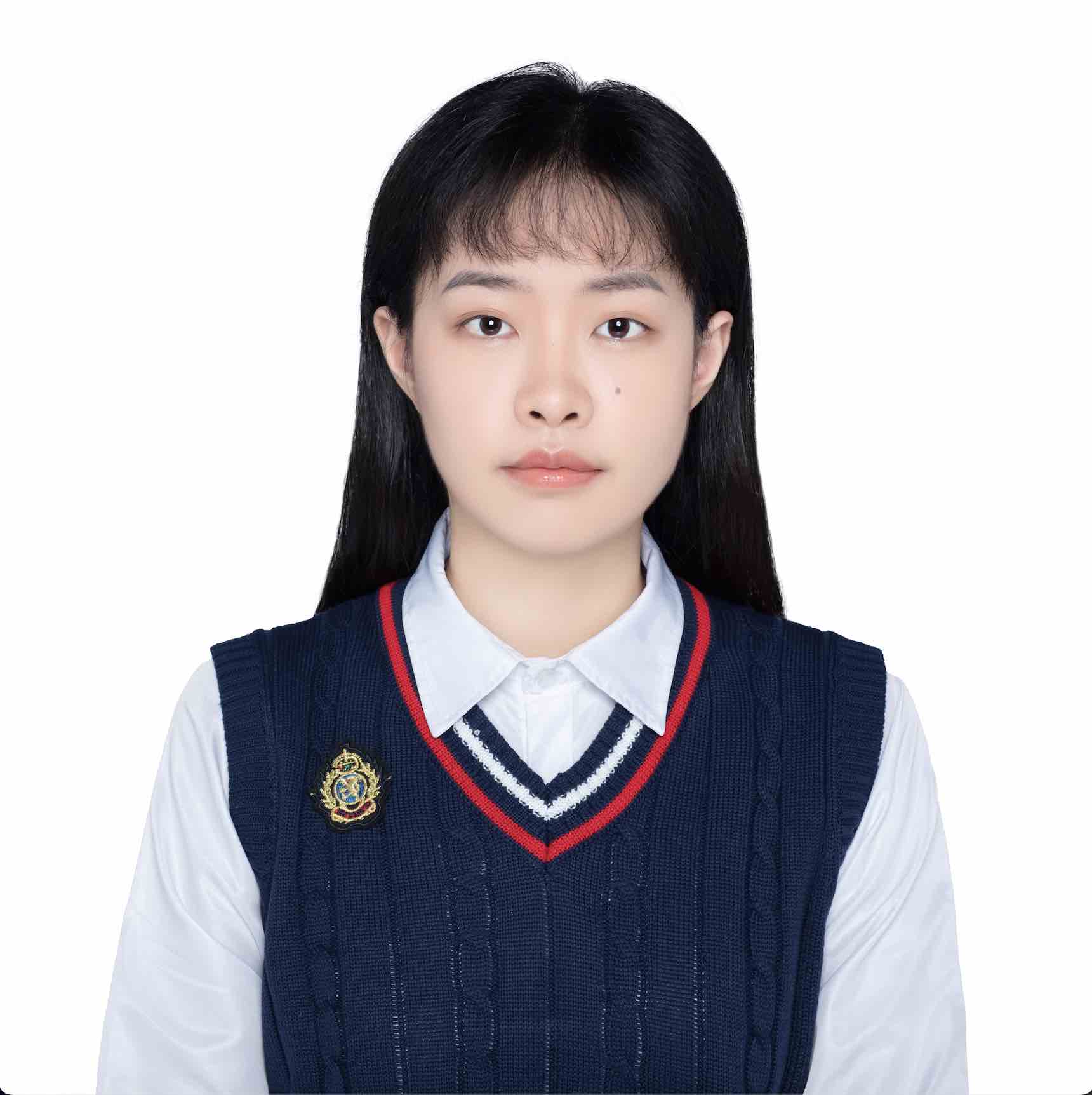}}]{Lumin Liu} (S'18) is currently working towards the Ph.D. degree in the Department of Electronic and Computer Engineering at the Hong Kong University of Science and Technology (HKUST).
\end{IEEEbiography}

\begin{IEEEbiography}[{\includegraphics[width=1in,height=1.25in,clip,keepaspectratio]{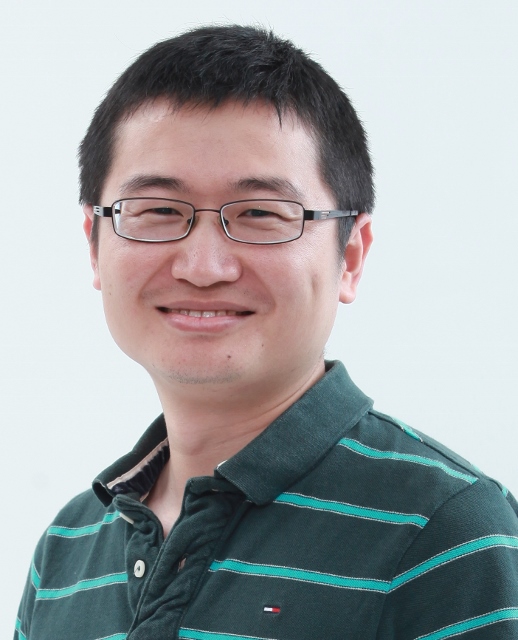}}]{Jun Zhang} (S’06-M’10-SM’15-F'21) 
received the B.Eng. degree in Electronic Engineering from the University of Science and Technology of China in 2004, the M.Phil. degree in Information Engineering from the Chinese University of Hong Kong in 2006, and the Ph.D. degree in Electrical and Computer Engineering from the University of Texas at Austin in 2009. He is an Associate Professor in the Department of Electronic and Computer Engineering at the Hong Kong University of Science and Technology. His research interests include wireless communications and networking, mobile edge computing and edge AI, and cooperative AI.

Dr. Zhang co-authored the book Fundamentals of LTE (Prentice-Hall, 2010). He is a co-recipient of several best paper awards, including the 2021 Best Survey Paper Award of the IEEE Communications Society, the 2019 IEEE Communications Society \& Information Theory Society Joint Paper Award, and the 2016 Marconi Prize Paper Award in Wireless Communications. Two papers he co-authored received the Young Author Best Paper Award of the IEEE Signal Processing Society in 2016 and 2018, respectively. He also received the 2016 IEEE ComSoc Asia-Pacific Best Young Researcher Award. He is an Editor of IEEE Transactions on Communications, and was an editor of IEEE Transactions on Wireless Communications (2015-2020). He served as a MAC track co-chair for IEEE Wireless Communications and Networking Conference (WCNC) 2011 and a co-chair for the Wireless Communications Symposium of IEEE International Conference on Communications (ICC) 2021.

\end{IEEEbiography}

\begin{IEEEbiography}[{\includegraphics[width=1in,height=1.25in,clip,keepaspectratio]{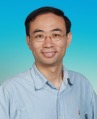}}]{S.H. Song} (S’02–M’06) is now an Assistant Professor jointly appointed by the Division of Integrative Systems and Design (ISD) and the Department of Electronic and Computer Engineering (ECE) at the Hong Kong University of Science and Technology (HKUST). His research is primarily in the areas of Wireless Communications and Machine Learning with current focus on Distributed Intelligence (Federated Learning), Machine Learning for Communications (Model and Data-driven Approaches), and Integrated Sensing and Communication. He was named the Exemplary Reviewer for IEEE Communications Letter and served as the Tutorial Program Co-Chairs of the 2022 IEEE International Mediterranean Conference on Communications and Networking.

Dr. Song is also interested in the research on Engineering Education and is now serving as an Associate Editor for the IEEE Transactions on Education. He has won several teaching awards at HKUST, including the Michael G. Gale Medal for Distinguished Teaching in 2018, the Best Ten Lecturers in 2013, 2015, and 2017, the School of Engineering Distinguished Teaching Award in 2012, the Teachers I Like Award in 2013, 2015, 2016, and 2017, and the MSc(Telecom) Teaching Excellent Appreciation Award for 2020-21. Dr. Song was one of the honorees of the Third Faculty Recognition at HKUST in 2021.

\end{IEEEbiography}

\begin{IEEEbiography}[{\includegraphics[width=1in,height=1.25in,clip,keepaspectratio]{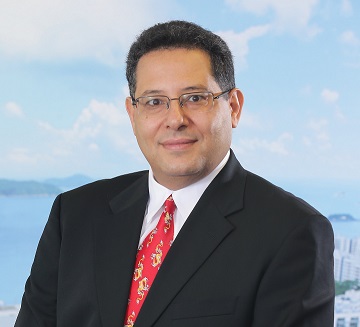}}]{Khaled B. Letaief} (S’85–M’86–SM’97–F’03) received the BS degree with distinction in Electrical Engineering from Purdue University at West Lafayette, Indiana, USA, in December 1984. He received the MS and Ph.D. Degrees in Electrical Engineering from Purdue University, in Aug. 1986, and May 1990, respectively.
	
He is an internationally recognized leader in wireless communications and networks with research interest in wireless communications, artificial intelligence, big data analytics systems, mobile edge computing, 5G systems and beyond.  In these areas, he has over 620 journal and conference papers and given keynote talks as well as courses all over the world.  He also has 15 patents, including 11 US patents.
	
He is well recognized for his dedicated service to professional societies and in particular IEEE where he has served in many leadership positions, including President of IEEE Communications Society, the world’s leading organization for communications professionals with headquarters in New York and members in 162 countries.  He is also the founding Editor-in-Chief of the prestigious IEEE Transactions on Wireless Communications and served on the editorial board of other premier journals including the IEEE Journal on Selected Areas in Communications – Wireless Series (as Editor-in-Chief).  
	
He has been a dedicated teacher committed to excellence in teaching and scholarship with many teaching awards, including the Michael G. Gale Medal for Distinguished Teaching.

He is the recipient of many other distinguished awards including the 2019 Distinguished Research Excellence Award by HKUST School of Engineering (Highest research award and only one recipient/3 years is honored for his/her contributions); 2019 IEEE Communications Society and Information Theory Society Joint Paper Award; 2018 IEEE Signal Processing Society Young Author Best Paper Award; 2017 IEEE Cognitive Networks Technical Committee Publication Award; 2016 IEEE Signal Processing Society Young Author Best Paper Award; 2016 IEEE Marconi Prize Paper Award in Wireless Communications; 2011 IEEE Wireless Communications Technical Committee Recognition Award; 2011 IEEE Communications Society Harold Sobol Award; 2010 Purdue University Outstanding Electrical and Computer Engineer Award; 2009 IEEE Marconi Prize Award in Wireless Communications; 2007 IEEE Communications Society Joseph LoCicero Publications Exemplary Award; and over 15 IEEE Best Paper Awards.
Since 1993, he has been with HKUST where he has held many administrative positions, including the Head of the Electronic and Computer Engineering department.  He also served as Chair Professor and HKUST Dean of Engineering.  Under his leadership, the School of Engineering dazzled in international rankings (rising from $\#$ 26 in 2009 to $\#$ 14 in the world in 2015 according to QS World University Rankings.) 
	
He is a Fellow of IEEE and a Fellow of HKIE. He is also recognized by Thomson Reuters as an ISI Highly Cited Researcher.

\end{IEEEbiography}

\end{document}